\documentclass{article}

\usepackage{microtype}
\usepackage{graphicx}
\usepackage{subcaption}
\usepackage{multirow, booktabs}
\usepackage[dvipsnames]{xcolor}
\usepackage{makecell}

\usepackage{hyperref}
\definecolor{myorange}{RGB}{2, 142, 2}
\hypersetup{
    colorlinks=true,
    linkcolor=red,
    filecolor=magenta,
    urlcolor=cyan,
    citecolor=myorange,
}

\usepackage[accepted]{icml2026}

\usepackage{amsmath}
\usepackage{amssymb}
\usepackage{mathtools}
\usepackage{amsthm}

\usepackage[capitalize,noabbrev]{cleveref}

\usepackage{thmtools}
\usepackage{thm-restate}

\theoremstyle{plain}
\newtheorem{theorem}{Theorem}[section]
\newtheorem{proposition}[theorem]{Proposition}
\newtheorem{lemma}[theorem]{Lemma}
\newtheorem{corollary}[theorem]{Corollary}
\theoremstyle{definition}

\theoremstyle{remark}

\usepackage{enumitem}

\DeclareMathOperator{\sgn}{sgn}
\DeclareMathOperator{\sg}{sg}
\DeclareMathOperator{\clip}{clip}

\icmltitlerunning{One-Way Policy Optimization for Self-Evolving LLMs}

\begin{document}

\twocolumn[
  \icmltitle{One-Way Policy Optimization for Self-Evolving LLMs}

  \icmlsetsymbol{equal}{*}

  \begin{icmlauthorlist}
    \icmlauthor{Shuo Yang}{equal,pku,qwen}
    \icmlauthor{Jinda Lu}{equal,qwen}
    \icmlauthor{Kexin Huang}{qwen}
    \icmlauthor{Chiyu Ma}{qwen,dartmouth}
    \icmlauthor{Shaohang Wei}{pku}
    \icmlauthor{Yuyang Liu}{pku}
    \icmlauthor{Guoyin Wang}{qwen}
    \icmlauthor{Jingren Zhou}{alibaba}
    \icmlauthor{Li Yuan}{pku}
  \end{icmlauthorlist}

    \icmlaffiliation{pku}{Shenzhen Graduate School, Peking University}
    \icmlaffiliation{qwen}{Qwen Pilot Team}
    \icmlaffiliation{dartmouth}{Dartmouth College}
    \icmlaffiliation{alibaba}{Alibaba}
    \icmlcorrespondingauthor{Guoyin Wang}{guoyin.wang@alibaba-inc.com}
    \icmlcorrespondingauthor{Li Yuan}{yuanli-ece@pku.edu.cn}

  \icmlkeywords{Machine Learning, ICML}

  \vskip 0.3in
]

\printAffiliationsAndNotice{\icmlEqualContribution}

\begin{abstract}
Reinforcement Learning with Verifiable Rewards (RLVR) has become a promising paradigm for scaling reasoning capabilities of Large Language Models (LLMs). However, the sparsity of binary verifier rewards often leads to low efficiency and optimization instability. To stabilize training, existing methods typically impose token-level constraints relative to a reference policy. We identify that such constraints penalize deviations indiscriminately; this can flip verifier-determined direction when the policy attempts to outperform the reference, thereby suppressing gains. To resolve this, we propose \textbf{One-Way Policy Optimization (OWPO)}, a method based on the principle of decoupling optimization direction from update magnitude. In OWPO, the verifier dictates the update direction, while the reference policy serves only to adjust the magnitude. Specifically, OWPO applies asymmetric reweighting: it performs \textbf{Accelerated Alignment} for inferior deviations (where the policy lags behind the reference) and \textbf{Gain Locking} for superior deviations (where the policy surpasses the reference). Furthermore, by incorporating iterative reference updates, OWPO creates a ``Ratchet Effect'' that continuously consolidates gains. Experimental results demonstrate that OWPO outperforms strong baselines, including DAPO, OPD, and MOPD, breaking the bottleneck of fixed priors to enable continuous self-evolution without reliance on external reference models.
\end{abstract}

\section{Introduction}

As large language models (LLMs)~\cite{yang2025qwen3, team2025kimi, comanici2025gemini} are increasingly trained for complex reasoning~\cite{wei2022chain,ma2026fipo}, Reinforcement Learning with Verifiable Rewards (RLVR)~\cite{guo2025deepseek, jaech2024openai,meng2026sparse} has become a promising post-training paradigm. RLVR leverages feedback from external verifiers to provide automatically checkable rewards, and has shown strong potential on verifiable reasoning tasks~\cite{shao2024deepseekmath}.  However, verifier rewards are typically sparse and binary, and are often only available at the sequence level. This provides a weak training signal, reducing sample efficiency and frequently resulting in unstable optimization and slow convergence~\cite{hochlehnert2025sober, williams1992simple}.

\begin{figure}[t]
    \centering
    \includegraphics[width=1\columnwidth]{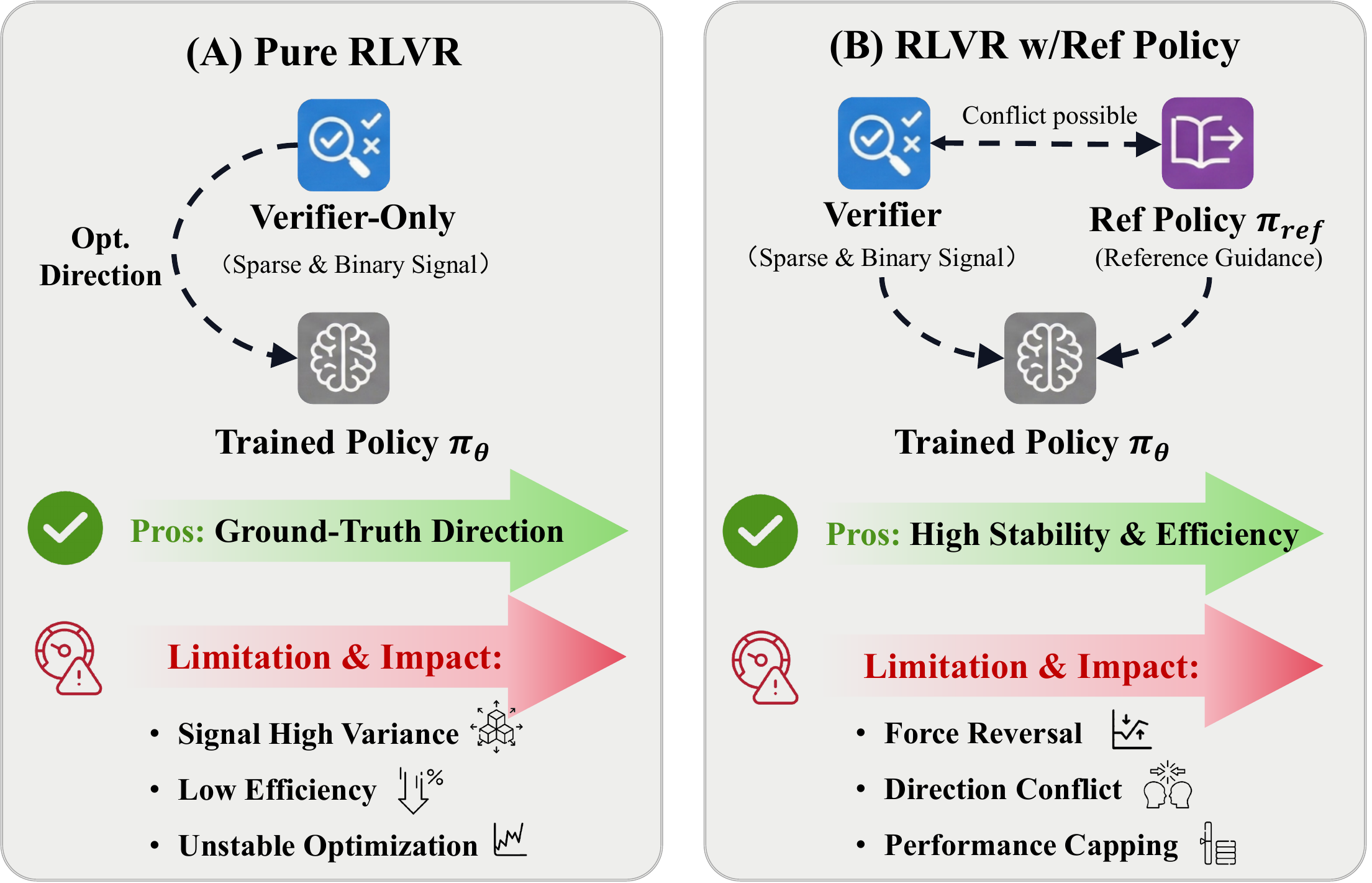}
    \caption{\textbf{Comparison of RLVR paradigms.} (A) \textbf{Pure RLVR} ensures the correct optimization direction via verifier signals but suffers from instability due to sparsity. (B) \textbf{RLVR with Reference Policy} (e.g., KL regularization) improves stability but introduces a \textbf{direction conflict}: the reference constraint can forcibly reverse reward-improving updates (Force Reversal) when the policy attempts to deviate from the prior, thereby capping performance.}
    \label{fig1:main fig}
\end{figure}

To address this, a common way to stabilize RLVR under sparse, binary verifier rewards is to introduce a reference policy and impose token-level guidance or constraints during optimization~\cite{agarwal2024policy, yang2025qwen3, lightman2023let,huang2026direction}. For example, on-policy distillation aligns the student distribution to a reference model on the student's own trajectories, while KL-regularized RL constrains policy updates by penalizing deviations from the reference distribution~\cite{xu2025kdrl,xiao2026mimo}. By providing dense token-level signals and keeping updates close to a known prior, these methods substantially improve training stability and efficiency.

However, a key issue is \textbf{direction}: the token-level constraint pushes the policy toward the reference even when the reward-improving direction is away from it. In particular, KL regularization penalizes deviations from the reference regardless of whether they increase reward. For example, when a trajectory has positive advantage ($A>0$) over the reference, the reward term would encourage moving further in that direction, but the KL term still exerts a strong pull-back toward the reference. If the KL pressure is large enough, it can \textbf{flip the effective update direction}, turning an improving step into a step back toward the reference and limiting gains beyond it (Figure~\ref{fig1:main fig}(B)).

To resolve this conflict, we propose that the guidance from the reference policy should not interfere with the optimization direction determined by the verifier. Instead, we introduce a simple principle: \textbf{optimization direction and update magnitude should be decoupled}. Specifically, the verifier dictates the direction (the gradient sign), while the reference policy serves only to adjust the magnitude of the update. Building on this, we propose \textbf{One-Way Policy Optimization (OWPO)}. OWPO keeps the reward-improving direction unchanged, but uses the reference policy to scale the step size at the token level. This preserves stability while avoiding unnecessary suppression of superior deviations.

\begin{table}[t]
\centering
\caption{\textbf{Categorization of deviations.} We classify policy behaviors based on the verifier's signal (Advantage $A_t$) and the deviation from the reference ($\pi_\theta$ vs. $\pi_{\text{ref}}$). \textbf{Superior deviations} (Cases I \& IV) indicate the policy outperforms the reference, whereas \textbf{Inferior deviations} (Cases II \& III) indicate lag. }
\label{tab:deviation_types}
\setlength{\tabcolsep}{4.5pt}
\begin{tabular}{cclll}
\toprule
\textbf{Case} & \textbf{Adv.} & \textbf{Relation} & \textbf{Intuition} & \textbf{Deviation} \\
\midrule
I & $A_t>0$ & $\pi_\theta > \pi_{\text{ref}}$ & \makecell[l]{Correct \& \\ Confident} & \textcolor{ForestGreen}{\textbf{Superior}} \\
\midrule
II & $A_t>0$ & $\pi_\theta < \pi_{\text{ref}}$ & \makecell[l]{Correct but \\ Hesitant} & \textcolor{BrickRed}{Inferior} \\
\midrule
III & $A_t<0$ & $\pi_\theta > \pi_{\text{ref}}$ & \makecell[l]{Wrong \& \\ Over-confident} & \textcolor{BrickRed}{Inferior} \\
\midrule
IV & $A_t<0$ & $\pi_\theta < \pi_{\text{ref}}$ & \makecell[l]{Wrong \& \\ Cautious} & \textcolor{ForestGreen}{\textbf{Superior}} \\
\bottomrule
\end{tabular}
\end{table}

OWPO realizes the decoupling of optimization direction and update magnitude through token-level advantage reweighting. Specifically, based on the verifier's directional signal (the sign of the advantage) and the deviation from the reference model, we categorize the four interaction scenarios into two distinct regimes and apply specific weighting rules for each (as summarized in Table~\ref{tab:deviation_types}):

\textbf{(1) Accelerated Alignment} for Inferior Deviations (Cases II $\&$ III): We define a deviation as 'Inferior' when the current policy $\pi_\theta$ lags behind the reference $\pi_{\text{ref}}$ along the verifier-determined direction—when $\pi_\theta$ is less confident than $\pi_{\text{ref}}$ on correct tokens or more confident on wrong ones. To counter this, OWPO increases the gradient weights ($w > 1$), leveraging the reference prior to accelerate correction.

\textbf{(2) Gain Locking} for Superior Deviations (Cases I $\&$ IV): We define a deviation as 'Superior' when $\pi_\theta$ surpasses $\pi_{\text{ref}}$ along the verifier-determined direction—exhibiting higher confidence on correct tokens or stronger suppression on wrong ones. To preserve these advanced gains, OWPO reduces the weights ($w < 1$), shielding them from high-variance interference while maintaining verifier's direction.

However, relying on a fixed $\pi_{\text{ref}}$ eventually creates a bottleneck. Since OWPO performs a one-way trust region update anchored to the reference, the optimization scope becomes constrained once $\pi_\theta$ significantly outperforms the initial prior. To transcend this limit, we extend OWPO into an iterative framework~\cite{zelikman2022star, gulcehre2023reinforced}. By periodically updating $\pi_{\text{ref}}$ to the optimized $\pi_\theta$, we establish a 'Ratchet Effect': each iteration consolidates recent gains into a new baseline, enabling sustained performance improvement across stages.

We propose \textbf{OWPO}, which decouples update magnitude from direction to prevent flipping the update direction. We show that its asymmetric reweighting preserves beneficial deviations, allowing OWPO to outperform strong baselines (e.g., MOPD) and break the ``prior ceiling'' for continuous self-evolution without reliance on external teacher models.

\textbf{Conflict of Interest Disclosure.} Several authors are employees of Alibaba Group, which developed the Qwen series of models used in this study. This work strictly utilizes the publicly accessible open-source weights of Qwen.

\section{Preliminaries}
\label{sec:preliminaries}
In this section, we briefly review the frameworks that form the basis of our study, progressing from RLVR methods to reference-guided strategies.

\textbf{Group Relative Policy Optimization (GRPO)~\cite{shao2024deepseekmath}.}
GRPO eliminates value networks by estimating advantages via group sampling, typically incorporating a KL penalty towards $\pi_{\theta_{\mathrm{old}}}$. Given a group of outputs $\{y_i\}_{i=1}^G$ sampled from $\pi_{\theta_{\mathrm{old}}}$, the advantage is normalized against group statistics:
\begin{equation}
\resizebox{0.95\hsize}{!}{$
    \hat{A}_i = \frac{R_i - \text{mean}(\{R_k\}_{k=1}^G)}{\text{std}(\{R_k\}_{k=1}^G)}, \, \text{where } R_i = \mathbb{I}(\text{Verify}(x, y_i)).
$}
\end{equation}

\textbf{Decoupled Clip and Dynamic sAmpling Policy Optimization (DAPO)}~\cite{yu2025dapo} refines GRPO by removing the standard KL penalty to prevent over-regularization and introducing asymmetric clipping $(1 - \epsilon_{\mathrm{low}}, 1 + \epsilon_{\mathrm{high}})$. It stabilizes gradients by enforcing a \textit{dynamic sampling} condition (ensuring mixed success/failure within a batch) and maximizes the following token-normalized objective:
\begin{equation}
\label{eq:dapo_objective}
\resizebox{0.95\hsize}{!}{$
\begin{split}
    \mathcal{J}_{\text{DAPO}}(\theta) = \mathbb{E}_{\substack{x\sim\mathcal{D} \\ \{y_i\}_{i=1}^G \sim \pi_{\theta_{\mathrm{old}}}}}& \bigg[ \frac{1}{Z} \sum_{i=1}^G \sum_{t=1}^{|y_i|} \min \Big(  r_{i,t}(\theta)\hat{A}_{i,t}, \\
    & \clip(r_{i,t}(\theta), 1 - \epsilon_{\mathrm{low}}, 1 + \epsilon_{\mathrm{high}})\hat{A}_{i,t} \Big) \bigg].
\end{split}
$}
\end{equation}

\begin{figure*}[!t]
    \centering
    \includegraphics[width=2\columnwidth]{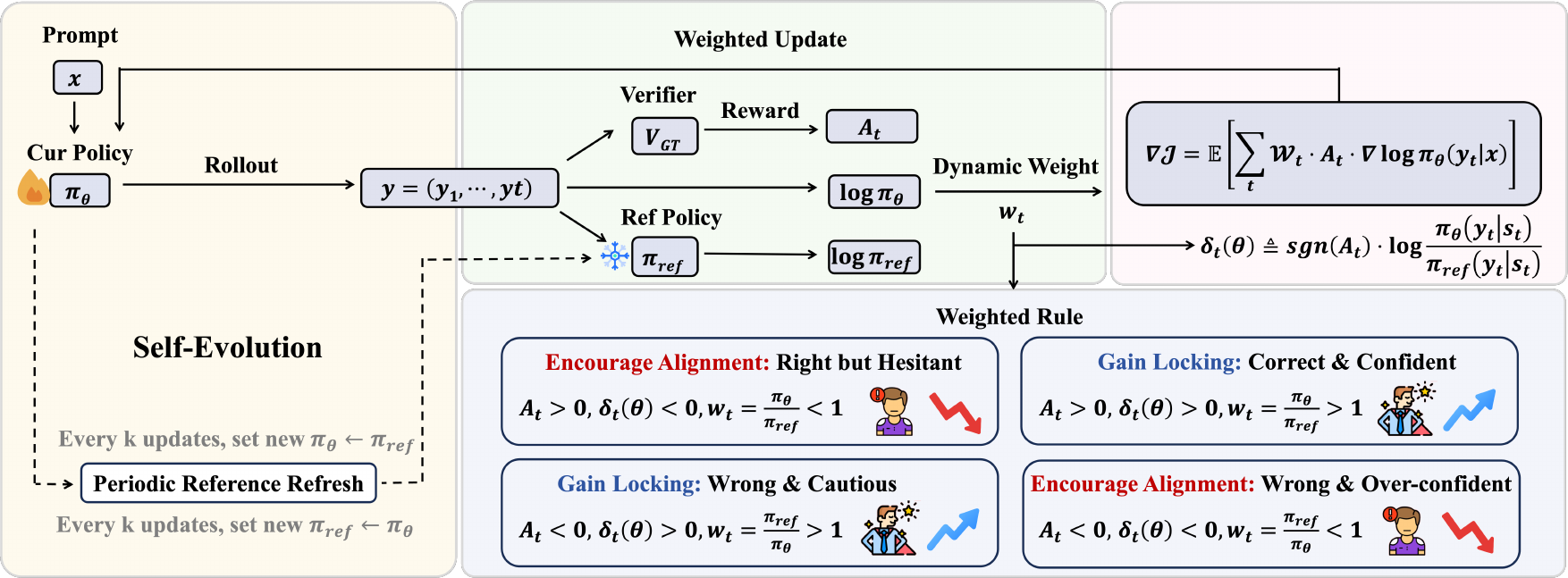}
    \caption{Overview of OWPO. The pipeline decouples the optimization direction (determined by the Verifier $V_{GT}$) from the update magnitude (modulated by the Ref Policy $\pi_{\mathrm{ref}}$). Based on the Directional Deviation $\delta_t$, OWPO dynamically applies asymmetric weights $w_t$: executing Accelerated Alignment for inferior deviations to correct lag, and Gain Locking for superior deviations to protect exploration gains. Furthermore, the Periodic Reference Refresh mechanism (bottom left) iteratively updates $\pi_{\mathrm{ref}}$ to sustain continuous self-evolution.}
    \label{fig: pipeline}
\end{figure*}

To address the variance of sparse rewards, reference-guided methods provide dense supervision. \textbf{On-Policy Distillation (OPD)}~\cite{agarwal2024policy} minimizes the reverse KL divergence $\mathbb{D}_{\text{KL}}(\pi_{\theta} || \pi_{\mathrm{ref}})$ on student-generated trajectories. Extending this, \textbf{Multi-teacher On-Policy Distillation (MOPD)}~\cite{xiao2026mimo} formulates distillation as an RL problem, optimizing a weighted objective:
\begin{equation}
\label{eq:mopd_objective}
\resizebox{0.99\hsize}{!}{$
\begin{split}
    \mathcal{J}_{\text{MOPD}}(\theta) = \mathbb{E}_{\substack{x\sim\mathcal{D} \\ y\sim\mu_{\theta}(\cdot|x)}} \bigg[ \frac{1}{|y|} \sum_{t=1}^{|y|} r_t(\theta) \cdot \hat{A}_{\text{MOPD}, t} \cdot \log \pi_{\theta}(y_t|x, y_{<t}) \bigg]
\end{split}
$}
\end{equation}
MOPD constructs a hybrid advantage that combines the dense teacher log-ratios with sparse outcome rewards:

\begin{equation}
    \hat{A}_{\text{MOPD}, t} = \underbrace{\sg\left[ \log \frac{\pi_{\mathrm{ref}}(y_t|\cdot)}{\pi_{\theta}(y_t|\cdot)} \right]}_{\text{Distillation Advantage}} + \alpha \hat{A}_{\text{ORM}},
\end{equation}
where $\sg[\cdot]$ is the stop-gradient operator. While effective, this hybrid formulation can introduce conflicting gradients when the verification signal opposes the reference prior.

\section{Methodology}

To address the sparsity of rewards in RLVR, we propose \textbf{One-Way Policy Optimization (OWPO)}, which integrates two complementary signals: the \textbf{verifier's feedback} (acting as a proxy for ground truth) and the \textbf{reference policy} (providing a prior for token-level confidence). OWPO is built on a core principle: the \textbf{optimization direction} should be dictated by the verifier, while the \textbf{update magnitude} should be modulated by the reference policy.

\subsection{One-Way Policy Optimization}
\textbf{Directional Deviation.} Formally, let $\pi_\theta$ denote the current policy. Given an input $x$, $\pi_\theta$ generates a trajectory $y=(y_1,\dots,y_T)$. A verifier evaluates this trajectory to compute the advantage $A_t$. Concurrently, the reference policy $\pi_{\mathrm{ref}}$ computes the token-level log-probability $\log\pi_{\mathrm{ref}}(y_t\mid s_t)$ on the data generated by $\pi_\theta$ (i.e., on-policy data), a baseline confidence for each token. To quantify how the policy performs relative to the reference along the verifier's direction, we define the \textbf{Directional Deviation} as:
\begin{equation}
    \delta_t(\theta)\triangleq \sgn(A_t)\cdot \log\frac{\pi_\theta(y_t\mid s_t)}{\pi_{\mathrm{ref}}(y_t\mid s_t)}.
    \label{eq: Directional Deviation}
\end{equation}
Here, $\delta_t$ combines the correctness (sign of $A_t$) and the relative confidence (log-ratio) into a single metric. Crucially, a positive $\delta_t$ corresponds to a \textbf{Superior deviation} (where the policy is ``ahead'' of the reference in the correct direction), while a negative $\delta_t$ indicates an \textbf{Inferior deviation}.

\textbf{The One-Way Objective.} To implement the decoupling of optimization direction and update magnitude, we apply a dynamic scalar weight $w_t$ to the standard PPO objective. OWPO maximizes the following reweighted objective:
\begin{equation}
\begin{split}
\mathcal{J}_{\mathrm{OWPO}}(\theta) = \mathbb{E} \bigg[ &\sum_t w_t \cdot \min \Big(  r_t(\theta)\,A_t, \\
& \clip\big(r_t(\theta),1-\epsilon,1+\epsilon\big)\,A_t \Big) \bigg]
\end{split}
\end{equation}
where $r_t(\theta)$ denotes the standard importance sampling ratio, and the one-way dynamic weight is defined as:
\begin{equation}
    w_t \triangleq \sg\!\left[\clip\!\left(\exp(-\delta_t),\,\epsilon_{\mathrm{low}},\,\epsilon_{\mathrm{high}}\right)\right].
    \label{eq: wt}
\end{equation}
where $\sg[\cdot]$ denotes the stop-gradient operator.

\textbf{Mechanism Analysis.}
The sign of $\delta_t$ determines the weight $w_t$, implementing the asymmetric one-way strategy:
\begin{itemize}[leftmargin=*, nosep]
  \item \textbf{Accelerated Alignment} ($\delta_t < 0 \Rightarrow w_t > 1$): This occurs during an \textbf{Inferior deviation}, where the policy lags behind the reference (e.g., lower confidence on correct tokens or higher confidence on wrong ones). Here, $w_t > 1$ amplifies the gradient, leveraging the reference prior to accelerate the alignment process.
  \item \textbf{Gain Locking} ($\delta_t > 0 \Rightarrow w_t < 1$): This occurs during a \textbf{Superior deviation}, where the policy outperforms the reference (e.g., higher confidence on correct tokens). Here, $w_t < 1$ reduces the gradient weight. This shrinks the update step size to lower variance, effectively ``locking'' these sparse but valuable gains into the policy and protecting them from instability.
\end{itemize}

\begin{algorithm}[h]
  \caption{Self-Evolving RLVR with OWPO}
  \label{alg:owpo}
  \begin{algorithmic}[1]
    \STATE {\bfseries Input:} Initial policy $\pi_\theta$, dataset $\mathcal{D}$, verifier $\mathcal{V}$, group size $G$, update interval $K$.
    \STATE Initialize reference policy $\pi_{\mathrm{ref}} \leftarrow \pi_\theta$.
    \REPEAT
      \STATE Sample prompts $x \sim \mathcal{D}$ and generate rollouts $\mathbf{y}=\{y_1, \dots, y_G\} \sim \pi_\theta(\cdot|x)$.
      \STATE Compute rewards via $\mathcal{V}$ and estimate advantages $\hat{A}$.
      \STATE Calculate active count $N_{\text{act}}$ linearly decaying from $G \to 1$ within current stage.
      \FOR{each sample $y_i$ in $\mathbf{y}$}
        \IF{$y_i$ is in top-$N_{\text{act}}$ samples based on $|\hat{A}|$}
          \STATE Compute token-level weights $w_{i,t}$.
        \ELSE
          \STATE Set standard weights $w_t \leftarrow 1$.
        \ENDIF
      \ENDFOR
      \STATE Update $\pi_\theta$.
      \IF{current step mod $K$ is $0$}
        \STATE Update reference policy: $\pi_{\mathrm{ref}} \leftarrow \pi_\theta$.
      \ENDIF
    \UNTIL{convergence}
  \end{algorithmic}
\end{algorithm}

\subsection{Iterative Bootstrapping and Algorithm}
While OWPO effectively optimizes within a one-way trust region, relying on a fixed $\pi_{\mathrm{ref}}$ eventually creates a bottleneck. As the policy $\pi_\theta$ significantly improves, the initial reference becomes an outdated lower bound.

\textbf{Self-Evolution via Ratchet Effect.}  To break this ceiling and achieve continuous self-evolution, we extend OWPO into an iterative bootstrapping process (Figure \ref{fig: pipeline}). The core mechanism involves a ``Hard Swap'' of the reference policy at the end of each iteration: $\pi_{\mathrm{ref}}^{(k+1)} \leftarrow \pi_{\theta}^{(k)}$. This updates the baseline and re-calibrates $\delta_t$, creating a ``Ratchet Effect'' that supports a unidirectional climb toward higher rewards.

\textbf{Implementation Pipeline.} In addition to the iterative update, we employ a dynamic curriculum for sample selection within each training stage. Specifically, the number of active samples within a group that participate in the reweighting calculation (Active Sample Count) linearly decays from $G$ to $1$ as training progresses. This strategy gradually shifts focus from broad exploration to exploitation. The complete training procedure is detailed in Algorithm \ref{alg:owpo}.

\begin{figure*}[!t]
    \centering
    \includegraphics[width=1.9\columnwidth]{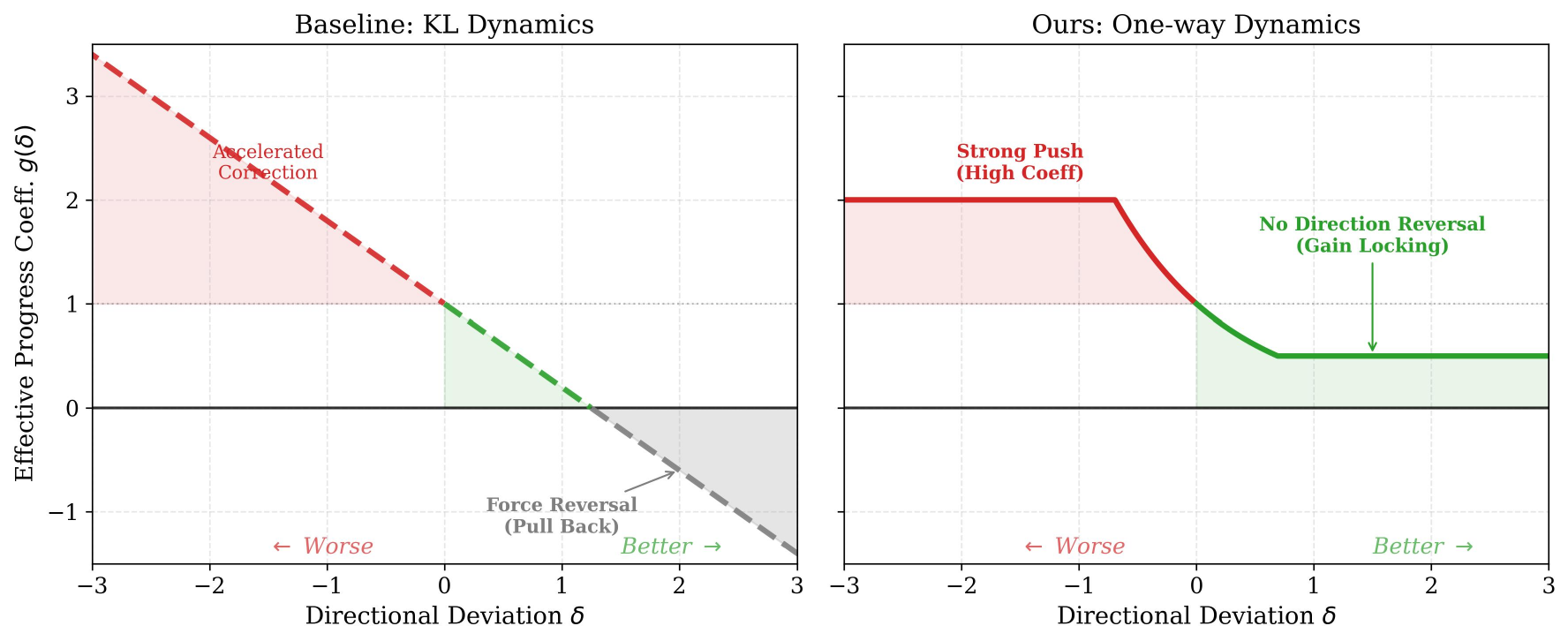}
    \caption{\textbf{Linearized effective progress dynamics.} \textbf{Left:} Standard KL exhibits \textbf{Force Reversal} ($g<0$) when deviations are large, effectively negating the reward signal. \textbf{Right:} OWPO maintains Unidirectional Dynamics ($g \ge \epsilon_{\text{low}}$). The asymmetric profile enables \textbf{Accelerated Alignment} for lags ($\delta<0$) and \textbf{Variance Reduction} for gains ($\delta>0$), preventing any direction flip.}
    \label{fig3:dynamics}
\end{figure*}

\section{Theoretical Analysis}
\label{sec:theory}

In this section, we analyze the theoretical properties of OWPO by connecting it to asymmetric directional regularization and examining its effective progress dynamics. OWPO is implemented with a PPO/DAPO-style clipped surrogate, while our analysis focuses on the \textbf{local first-order regime} at the start of an update, where $\pi_\theta \approx \pi_{\mathrm{old}}$ and thus $r_t(\theta) \approx 1$. In this regime, the clipped surrogate reduces to a weighted policy-gradient direction, allowing us to isolate how OWPO modulates the verifier-induced update. Thus, the regularization view below is local rather than a global identity for the full clipped objective.

Recalling the directional deviation $\delta_t(\theta)$ and the reweighting coefficient $w_t$ in Eqs. \ref{eq: Directional Deviation} and \ref{eq: wt}, we first characterize the optimization landscape of OWPO through this local equivalence.

\begin{lemma}[Local Directional Regularization]
\label{lem:equivalence}
In the local first-order regime described above, the OWPO update direction $\nabla J_{\mathrm{OWPO}}(\theta)=\mathbb{E}_{\tau \sim \pi_{\theta_{\mathrm{old}}}}\!\left[\sum_t w_t\, A_t \nabla_\theta \log \pi_\theta(y_t\mid s_t)\right]$ is equivalent to the gradient of a directionally regularized policy-gradient objective $\nabla_\theta \big(J_{\mathrm{PG}}(\theta)+\Omega(\theta)\big)$, where the regularization term is given by:
\begin{equation}
\Omega(\theta) = \mathbb{E}_{\tau \sim \pi_{\theta_{\mathrm{old}}}}\!\left[ \sum_t |A_t| \cdot \psi(\delta_t(\theta)) \right],
\end{equation}
with the potential function $\psi(\delta)=\int_0^\delta (w(u)-1)\,du$.
\end{lemma}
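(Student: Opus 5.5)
The plan is a direct chain-rule computation in the local first-order regime. The PG part and the regularizer will be differentiated separately and then matched term by term.

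\textbf{Step 1: the PG part.} In the local regime, $J_{\mathrm{PG}}(\theta)$ is the unclipped surrogate $\mathbb{E}_{\tau\sim\pi_{\theta_{\mathrm{old}}}}[\sum_t r_t(\theta)A_t]$. Since $\nabla_\theta r_t = r_t\nabla_\theta\log\pi_\theta(y_t\mid s_t)$ and $r_t\approx 1$ at $\theta=\theta_{\mathrm{old}}$, its gradient is
\[
\nabla J_{\mathrm{PG}}=\mathbb{E}\Big[\sum_t A_t\nabla_\theta\log\pi_\theta(y_t\mid s_t)\Big].
\]
The expectation is taken under the fixed sampling distribution $\pi_{\theta_{\mathrm{old}}}$, so no score-function term arises from differentiating the measure. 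The same fact lets me pass $\nabla_\theta$ inside $\Omega$.

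\textbf{Step 2: the regularizer.} The weight $w$ is treated as the scalar function $w(u)=\clip(e^{-u},\epsilon_{\mathrm{low}},\epsilon_{\mathrm{high}})$. It is continuous, so $\psi$ is $C^1$ with $\psi'(\delta)=w(\delta)-1$ by the fundamental theorem of calculus. From \eqref{eq: Directional Deviation}, $\pi_{\mathrm{ref}}$ is fixed within an update, so
\[
\nabla_\theta\delta_t=\sgn(A_t)\nabla_\theta\log\pi_\theta(y_t\mid s_t).
\]
The chain rule then gives
\[
\nabla\Omega=\mathbb{E}\Big[\sum_t |A_t|\,(w(\delta_t)-1)\,\sgn(A_t)\,\nabla_\theta\log\pi_\theta(y_t\mid s_t)\Big].
\]
Using $|A_t|\sgn(A_t)=A_t$, this becomes $\mathbb{E}[\sum_t (w_t-1)A_t\nabla\log\pi_\theta]$. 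Tokens with $A_t=0$ contribute zero on both sides, so the undefined-looking $\sgn$ causes no trouble.

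\textbf{Step 3: combine.} Adding the two gradients gives $\mathbb{E}[\sum_t w_t A_t\nabla\log\pi_\theta]$, which is exactly $\nabla J_{\mathrm{OWPO}}$ as stated. Here the stop-gradient in \eqref{eq: wt} means $w_t$ enters only as a coefficient evaluated at the current $\theta$, and that coefficient coincides with $w(\delta_t(\theta))$ pointwise.

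\textbf{Main obstacle.} The delicate point is reconciling the stop-gradient weight with a genuine potential. One has to argue that differentiating $\psi(\delta_t(\theta))$ produces $w(\delta_t)$ evaluated at the same $\theta$, with no extra terms. This is why $\psi$ is defined as an antiderivative rather than as $w$ itself. A second subtlety is that clipping makes $w$ only piecewise smooth. Continuity is enough for $\psi\in C^1$, so the identity holds everywhere. I would also remark that the equality is local: it holds for the unclipped surrogate at $r_t\approx 1$, consistent with the caveat stated before the lemma.
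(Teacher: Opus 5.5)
Your proposal is correct and follows the paper's proof essentially step for step. Both pass $\nabla_\theta$ inside the expectation under the fixed behavior policy, use $\psi'=w-1$ (with $\psi\in C^1$ because $w$ is continuous) and $\nabla_\theta\delta_t=\sgn(A_t)\nabla_\theta\log\pi_\theta(y_t\mid s_t)$, collapse $|A_t|\sgn(A_t)=A_t$, and add the policy-gradient term. The only difference is that the paper takes the ratio-free $J_{\mathrm{PG}}(\theta)=\mathbb{E}_{\tau\sim\pi_{\theta_{\mathrm{old}}}}[\sum_t A_t\log\pi_\theta(y_t\mid s_t)]$, whose gradient is exactly $\mathbb{E}[\sum_t A_t\nabla_\theta\log\pi_\theta(y_t\mid s_t)]$ for every $\theta$, so the identity holds exactly rather than only at $r_t=1$ as with your ratio-based surrogate.
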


Unlike standard KL divergence, the regularization $\Omega(\theta)$ incorporates $\sgn(A_t)$ via $\delta_t$, rendering it \textbf{direction-aware}. Based on Lemma \ref{lem:equivalence}, the total gradient can be expressed as a scaled version of the standard policy-gradient estimator:
\begin{equation}
\resizebox{\columnwidth}{!}{
    $
    \nabla_\theta \big(J_{\mathrm{PG}}+\Omega\big) = \mathbb{E}_{\tau \sim \pi_{\theta_{\mathrm{old}}}}\!\left[\sum_t g_{\mathrm{OWPO}}(\delta_t)\, A_t \nabla_\theta \log \pi_\theta(y_t\mid s_t)\right]
    $
}
\end{equation}
We define the \textit{Effective Progress Coefficient} as \[ g_{\mathrm{OWPO}}(\delta)\triangleq 1+\psi'(\delta)\equiv w(\delta). \]

This local view also clarifies the contrast with symmetric KL regularization. Let
\[
z_t(\theta)=\log\frac{\pi_\theta(y_t\mid s_t)}{\pi_{\mathrm{ref}}(y_t\mid s_t)},
\qquad
\delta_t=\sgn(A_t)z_t .
\]
A local quadratic approximation of a symmetric KL-style penalty gives the token-level surrogate
\[
\ell_{\mathrm{KL}}(z_t)=A_tz_t-\frac{\beta}{2}z_t^2 .
\]
For $A_t\neq 0$, its gradient can be written as
\[
\nabla_\theta \ell_{\mathrm{KL}}
=
\left(1-\frac{\beta}{|A_t|}\delta_t\right)
A_t\nabla_\theta\log\pi_\theta(y_t\mid s_t).
\]
Thus, when $\delta_t>|A_t|/\beta$, the effective coefficient becomes negative, causing \textbf{Force Reversal}: the KL term locally flips the verifier-induced update back toward the reference (Figure \ref{fig3:dynamics}, Left). In contrast, OWPO uses $g_{\mathrm{OWPO}}(\delta)=w(\delta)>0$, so the reference policy only modulates update magnitude and the verifier-determined direction is preserved.

\begin{proposition}[Properties of One-way Dynamics]
\label{prop:dynamics}
Assuming $0 < \epsilon_{\mathrm{low}} < 1 < \epsilon_{\mathrm{high}}$, the coefficient $g_{\mathrm{OWPO}}(\delta)$ satisfies:
\begin{enumerate}[leftmargin=*, nosep]
    \item \textbf{Non-reversal:} $\forall \delta \in \mathbb{R}, \ g_{\mathrm{OWPO}}(\delta) \ge \epsilon_{\mathrm{low}} > 0$.
    \item \textbf{Asymmetry:} $g_{\mathrm{OWPO}}(\delta) > 1$ for $\delta < 0$ (Inferior), and $g_{\mathrm{OWPO}}(\delta) \in [\epsilon_{\mathrm{low}}, 1)$ for $\delta > 0$ (Superior).
\end{enumerate}
\end{proposition}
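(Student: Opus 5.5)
The plan is to work directly from the identity $g_{\mathrm{OWPO}}(\delta)\equiv w(\delta)$ established after Lemma~\ref{lem:equivalence}. By Eq.~\ref{eq: wt}, this means $g_{\mathrm{OWPO}}(\delta)=\clip(e^{-\delta},\epsilon_{\mathrm{low}},\epsilon_{\mathrm{high}})$; the stop-gradient plays no role when $w$ is viewed as a scalar function of $\delta$. Both claims then reduce to elementary facts about composing the strictly decreasing map $\delta\mapsto e^{-\delta}$ with the clip operator $\clip(x,a,b)=\min(\max(x,a),b)$. I would first record that the clip operator is well defined and monotone here, since the hypothesis $\epsilon_{\mathrm{low}}<\epsilon_{\mathrm{high}}$ follows from $0<\epsilon_{\mathrm{low}}<1<\epsilon_{\mathrm{high}}$. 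Its output always lies in $[\epsilon_{\mathrm{low}},\epsilon_{\mathrm{high}}]$.

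For \textbf{Non-reversal}, the lower bound is immediate from the range of the clip: $g_{\mathrm{OWPO}}(\delta)\ge\epsilon_{\mathrm{low}}$ for every real $\delta$, and $\epsilon_{\mathrm{low}}>0$ by assumption. Since this holds pointwise for every token, the effective coefficient multiplying $A_t\nabla_\theta\log\pi_\theta$ is strictly positive. Consequently, the sign of each token's contribution is $\sgn(A_t)$, which is exactly the non-reversal interpretation contrasted with the KL coefficient $1-\beta\delta_t/|A_t|$.

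For \textbf{Asymmetry}, I would split into the two sign cases.

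\emph{Case $\delta<0$.} Here $e^{-\delta}>1$. The clip then gives $\min(e^{-\delta},\epsilon_{\mathrm{high}})$, because $e^{-\delta}>1>\epsilon_{\mathrm{low}}$ makes the max inactive. This is either $e^{-\delta}>1$ or $\epsilon_{\mathrm{high}}>1$, so in both subcases the value is strictly greater than $1$.

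\emph{Case $\delta>0$.} Here $e^{-\delta}\in(0,1)$. The clip gives $\max(e^{-\delta},\epsilon_{\mathrm{low}})$, because $e^{-\delta}<1<\epsilon_{\mathrm{high}}$ makes the min inactive. This value is at least $\epsilon_{\mathrm{low}}$. It is also strictly less than $1$, since both $e^{-\delta}<1$ and $\epsilon_{\mathrm{low}}<1$. Hence it lies in $[\epsilon_{\mathrm{low}},1)$.

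As a remark, I would note that at $\delta=0$ we have $g=1$, which makes $\psi$ from Lemma~\ref{lem:equivalence} vanish to first order.

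There is no real obstacle in this argument. The only point needing care is that the strict inequalities $\epsilon_{\mathrm{low}}<1<\epsilon_{\mathrm{high}}$ are used in both directions. They make one side of the clip inactive in each case, and they guarantee the strictness of $g>1$ and $g<1$ even when clipping is active. I would state these uses explicitly so the dependence on the hypothesis is transparent.
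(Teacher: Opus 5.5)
Your proposal is correct and follows the paper's own proof. Both arguments identify $g_{\mathrm{OWPO}}(\delta)$ with $\clip(e^{-\delta},\epsilon_{\mathrm{low}},\epsilon_{\mathrm{high}})$, read non-reversal off the lower clip bound, and split on the sign of $\delta$, using $\epsilon_{\mathrm{low}}<1<\epsilon_{\mathrm{high}}$ to get $g>1$ for $\delta<0$ and $g\in[\epsilon_{\mathrm{low}},1)$ for $\delta>0$; your $\min(\max(\cdot))$ form agrees with the paper's $\max(\min(\cdot))$ form since $\epsilon_{\mathrm{low}}<\epsilon_{\mathrm{high}}$.
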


Proposition \ref{prop:dynamics} highlights two distinct regimes aligned with our design:
(1) For \textbf{Inferior deviations} ($\delta < 0$), $g > 1$ creates a steep gradient barrier to enforce \textbf{Accelerated Alignment};
(2) For \textbf{Superior deviations} ($\delta > 0$), the decay of $g$ towards $\epsilon_{\mathrm{low}}$ prevents direction reversal while strictly reducing the step size (Figure \ref{fig3:dynamics}, Right). This naturally leads to variance reduction during the exploration phase.

\paragraph{Local forward/reverse-KL view.}
OWPO is not an exact optimizer of either a forward-KL or reverse-KL objective. Nevertheless, its unclipped weights provide a useful local KL-style interpretation. When $A_t>0$, $w_t=\pi_{\mathrm{ref}}(y_t\mid s_t)/\pi_\theta(y_t\mid s_t)$, which amplifies positive-advantage tokens where the current policy lags behind the reference and downweights tokens where it already surpasses the reference. This resembles a conservative reverse-KL-like correction. When $A_t<0$, $w_t=\pi_\theta(y_t\mid s_t)/\pi_{\mathrm{ref}}(y_t\mid s_t)$, which amplifies the suppression of over-confident wrong tokens and reduces updates on already cautious tokens, giving a forward-KL-like correction. This view is complementary to, rather than a replacement for, the directional regularization analysis above.

\begin{corollary}[Variance Bounding]
\label{cor:variance}
Let $G_t = A_t \nabla_\theta \log \pi_\theta(y_t\mid s_t)$ be the raw gradient estimator. The second moment of the weighted gradient satisfies:
\begin{equation}
\epsilon_{\mathrm{low}}^2 \, \mathbb{E}[\|G_t\|^2] \le \mathbb{E}[\|w_t G_t\|^2] \le \epsilon_{\mathrm{high}}^2 \, \mathbb{E}[\|G_t\|^2].
\end{equation}
Specifically, conditioned on \textbf{Superior deviations} ($\delta > 0$), we have $\mathbb{E}[\|w_t G_t\|^2 \mid \delta > 0] \le \mathbb{E}[\|G_t\|^2 \mid \delta > 0]$.
\end{corollary}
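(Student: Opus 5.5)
The plan is to work pointwise first, then take expectations. The key observation is that $w_t$ is a scalar, stop-gradient weight. By construction it satisfies $w_t \in [\epsilon_{\mathrm{low}}, \epsilon_{\mathrm{high}}]$, since it is the clip of $\exp(-\delta_t)$ to that interval. This is exactly the content of the Non-reversal part of Proposition~\ref{prop:dynamics}, combined with the trivial upper clip. Because $w_t$ is a nonnegative scalar,
\[
\|w_t G_t\|^2 = w_t^2 \|G_t\|^2 .
\]
Squaring the bounds $\epsilon_{\mathrm{low}} \le w_t \le \epsilon_{\mathrm{high}}$ is legitimate because all quantities are positive. This gives, almost surely,
\[
\epsilon_{\mathrm{low}}^2 \|G_t\|^2 \le \|w_t G_t\|^2 \le \epsilon_{\mathrm{high}}^2 \|G_t\|^2 .
\]

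The second step is to take expectations. Monotonicity and linearity of expectation then yield the two-sided bound in the statement. I will assume $\mathbb{E}[\|G_t\|^2]<\infty$; otherwise both sides are infinite and the inequalities hold trivially in the extended sense. The expectation can be taken over whatever sampling distribution is used, for example $\tau \sim \pi_{\theta_{\mathrm{old}}}$. No independence between $w_t$ and $G_t$ is needed, because the bound is pointwise.

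The third step handles the conditional claim. On the event $\{\delta_t > 0\}$, Proposition~\ref{prop:dynamics} (Asymmetry) gives $w_t \in [\epsilon_{\mathrm{low}}, 1)$. Directly, $\exp(-\delta_t) < 1$, so clipping at $\epsilon_{\mathrm{low}} < 1$ keeps it below $1$. Hence $w_t^2 \le 1$ pointwise on this event, and so
\[
\|w_t G_t\|^2 \le \|G_t\|^2 .
\]
Conditional expectation on that event then preserves the inequality. This requires the event to have positive probability so that conditioning is well defined, and I will state that assumption explicitly.

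There is no real obstacle in this argument; the only care needed is bookkeeping. Specifically, I will note that the stop-gradient makes $w_t$ a constant multiplier of $G_t$, not something that contributes its own gradient term. I will also note that the bounds require the assumption $0<\epsilon_{\mathrm{low}}<1<\epsilon_{\mathrm{high}}$ from Proposition~\ref{prop:dynamics}.
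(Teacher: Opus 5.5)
Your proposal is correct and follows essentially the same route as the paper's proof. Both arguments use the pointwise bound $\epsilon_{\mathrm{low}}^2\|G_t\|^2 \le \|w_tG_t\|^2 \le \epsilon_{\mathrm{high}}^2\|G_t\|^2$ followed by taking expectations, and both use $w_t\in[\epsilon_{\mathrm{low}},1)$ on $\{\delta_t>0\}$ for the conditional claim. Your extra remarks on finiteness of $\mathbb{E}[\|G_t\|^2]$ and positive probability of the conditioning event are harmless refinements the paper leaves implicit.
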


Corollary \ref{cor:variance} formalizes the \textbf{Gain Locking} mechanism: by contracting the gradient's second moment in high-reward regions (Superior deviations), OWPO stabilizes the retention of sparse, high-value signals against stochastic noise.

\section{Experiments}
\subsection{Experimental setup}
\textbf{Implementation Details.} We conduct experiments on two base models: Qwen-2.5-Math-7B-Base~\cite{hui2024qwen2, yang2024qwen2} (8k context) and Qwen-3-8B-Base~\cite{yang2025qwen3} (20k context). All models are optimized in the RLVR post-training setting using the DAPO-Math-17k dataset~\cite{yu2025dapo}, with sparse, verifiable, sequence-level feedback. We use the AdamW optimizer with a learning rate of $1 \times 10^{-6}$ and a global batch size of 512. Rollout generation is accelerated via vLLM~\cite{kwon2023efficient}, and outcomes are verified using a mathematical verifier.

\textbf{Reference Policy.} To ensure a rigorous comparison, we adopt a two-stage protocol for all reference-guided baselines (i.e., OPD, MOPD, and OWPO). We first train a standard DAPO baseline and evaluate it periodically. The checkpoint achieving the highest Pass@1 score on the AIME24 validation set is selected as the fixed reference policy $\pi_{\mathrm{ref}}$. This ensures $\pi_{\mathrm{ref}}$ represents a strong, converged policy rather than a weak initialization, posing a substantial challenge for further improvement.

\begin{table*}[!t]
\caption{
    \textbf{Performance on math benchmarks.} 
    We compare OWPO against baselines on Qwen-2.5 and Qwen-3 base models. 
    Results report Pass@1 accuracy evaluated using \textbf{$k=32$ samples} per problem (Average@32) at both best and converged cpts. 
    The table contrasts \textbf{Pure RLVR} methods (top two rows) with those incorporating a \textbf{Reference Policy} (bottom four rows).
}
  \label{tab:math_benchmarks}
  \centering
  \resizebox{\textwidth}{!}{
  \begin{tabular}{@{} l l cccccccc @{}}
  \toprule
  \multirow{2}{*}{\textbf{Model}} & \multirow{2}{*}{\textbf{Method}} & \multicolumn{2}{c}{\textbf{AIME24}} & \multicolumn{2}{c}{\textbf{AIME25}} & \multicolumn{2}{c}{\textbf{AMC}} & \multicolumn{2}{c}{\textbf{Average}} \\ \cmidrule(lr){3-4} \cmidrule(lr){5-6} \cmidrule(lr){7-8} \cmidrule(lr){9-10}
   & & best & converged & best & converged & best & converged & best & converged \\ \midrule
  \multirow{6}{*}{\begin{tabular}[c]{@{}l@{}}Qwen-2.5-\\ Math-7B-Base\end{tabular}} 
   & GRPO & 32.08 & 29.79 & 11.77 & 12.60 & 67.39 & 64.87 & 37.08 & 35.75 \\
   & DAPO & 37.19 & 31.88 & 15.62 & \textbf{18.75} & 68.86 & \textbf{77.86} & 40.56 & 42.83 \\ \cline{2-10}
   & Sym-DAPO & 35.00 & 32.81 & 16.04 & 15.63 & 67.58 & 71.58 & 39.54 & 40.01 \\
   & MOPD & 38.85 & 37.19 & 15.93 & 16.98 & 68.56 & 62.91 & 41.11 & 39.03 \\
   & OPD & 38.12 & 35.52 & 15.21 & 16.56 & 68.59 & 69.05 & 40.64 & 40.38 \\
   & OWPO & \textbf{41.67} & \textbf{39.90} & \textbf{17.92} & 18.44 & \textbf{72.14} & 74.02 & \textbf{43.91} & \textbf{44.12} \\
   \midrule
  \multirow{6}{*}{\begin{tabular}[c]{@{}l@{}}Qwen-3-\\ 8B-Base\end{tabular}} 
   & GRPO & 31.67 & 29.27 & 23.54 & 20.42 & 67.73 & 66.98 & 40.98 & 38.89 \\
   & DAPO & 36.67 & 34.58 & 27.39 & 30.63 & 71.88 & 73.91 & 45.31 & 46.37 \\ \cline{2-10}
   & Sym-DAPO & 36.04 & 33.23 & 26.88 & 24.79 & 70.74 & 69.95 & 44.55 & 42.66 \\
   & MOPD & 37.80 & 36.04 & 29.38 & 27.19 & 72.03 & 72.82 & 46.40 & 45.35 \\
   & OPD & 38.23 & 35.42 & 27.81 & 26.56 & 71.16 & 70.22 & 45.73 & 44.07 \\
   & OWPO & \textbf{44.38} & \textbf{42.19} & \textbf{33.54} & \textbf{31.67} & \textbf{77.79} & \textbf{78.36} & \textbf{51.90} & \textbf{50.74} \\
   \bottomrule
  \end{tabular}
  }
\end{table*}

\begin{figure*}[!t]
    \centering
    \includegraphics[width=2\columnwidth]{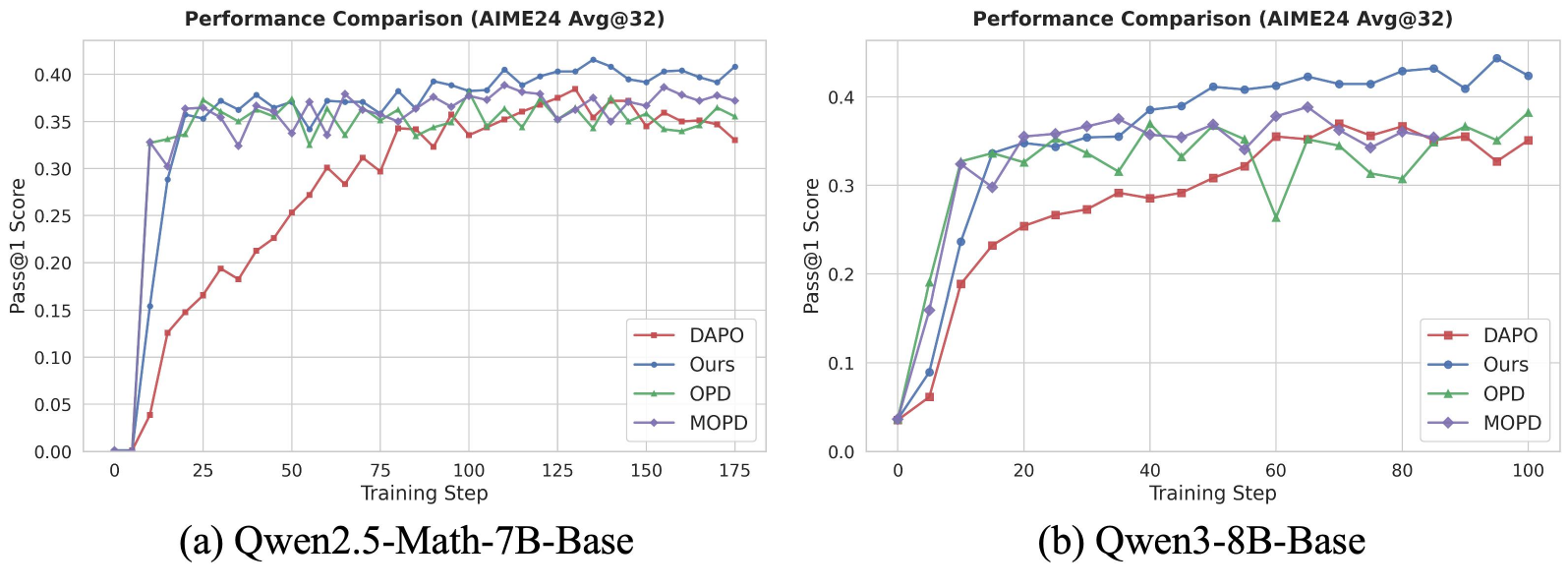}
    \caption{\textbf{Training Dynamics Comparison.} The Pass@1 score curves on AIME24 during training. While distillation-based methods (OPD, MOPD) converge quickly, they tend to plateau near the reference performance. OWPO exhibits continuous improvement, significantly outperforming the baselines at convergence.}
    \label{fig:curve_baseline}
\end{figure*}

\textbf{Baseline Descriptions.}
We compare OWPO against representative RLVR and distillation methods: \textbf{GRPO}~\cite{shao2024deepseekmath} and \textbf{DAPO}~\cite{yu2025dapo} (pure RLVR); \textbf{Sym-DAPO}\footnote{Sym-DAPO adapts~\cite{xu2025kdrl} by replacing GRPO with the stronger DAPO while maintaining symmetric constraints.}, which adds a symmetric KL penalty to DAPO to serve as a control for direction-agnostic constraints; \textbf{OPD}~\cite{agarwal2024policy} (on-policy reverse KL minimization); and \textbf{MOPD}~\cite{xiao2026mimo} (hybrid distillation with sparse rewards). Extensive hyperparameter tuning was applied to all baselines (see Appendix~\ref{app:experimental_setup}).

\textbf{Evaluation Protocol.} We employ AIME24 as the validation set for monitoring training dynamics and checkpoint selection. For each method, we report results at two distinct stages: (i) the \textbf{best} checkpoint based on AIME24 performance, and (ii) the \textbf{converged} checkpoint after training stabilizes. Crucially, for any given method, the benchmark scores reported in Table~\ref{tab:math_benchmarks} (including AIME25 and AMC) are evaluated using the \textbf{same checkpoint determined by AIME24}, ensuring a consistent cross-benchmark comparison. We report Pass@1 and Pass@16 accuracy and additionally evaluate general-domain transfer on GPQA and MMLU-Pro~\cite{wang2024mmlu}.

\subsection{Main results}
\textbf{Performance on Math Benchmarks.}
As shown in Table~\ref{tab:math_benchmarks}, OWPO consistently outperforms all baselines across both base models. While reference-guided baselines (OPD, MOPD) exhibit high stability with minimal gaps between \textbf{best} and \textbf{converged} scores, they struggle to significantly surpass the reference. MOPD achieves only marginal average gains ($+0.93\%$ and $+1.13\%$), suggesting that symmetric KL penalties induce \textbf{Force Reversal}, constraining the model to the reference distribution even when it attempts to improve. In contrast, OWPO effectively breaks this performance ceiling. By leveraging the one-way mechanism to apply \textbf{Accelerated Alignment} for Inferior deviations while performing \textbf{Gain Locking} for Superior deviations, OWPO achieves substantial average improvements ($+3.88\%$ and $+7.71\%$).
Tables~\ref{tab:pass16_best} and \ref{tab:pass16_conv} show OWPO maintains competitive Pass@16 results, validating reliability. Notably, OWPO maintains superior performance at convergence, avoiding the regression typical of pure RL (e.g., GRPO, DAPO).

\begin{figure*}[!t]
    \centering
    \includegraphics[width=2\columnwidth]{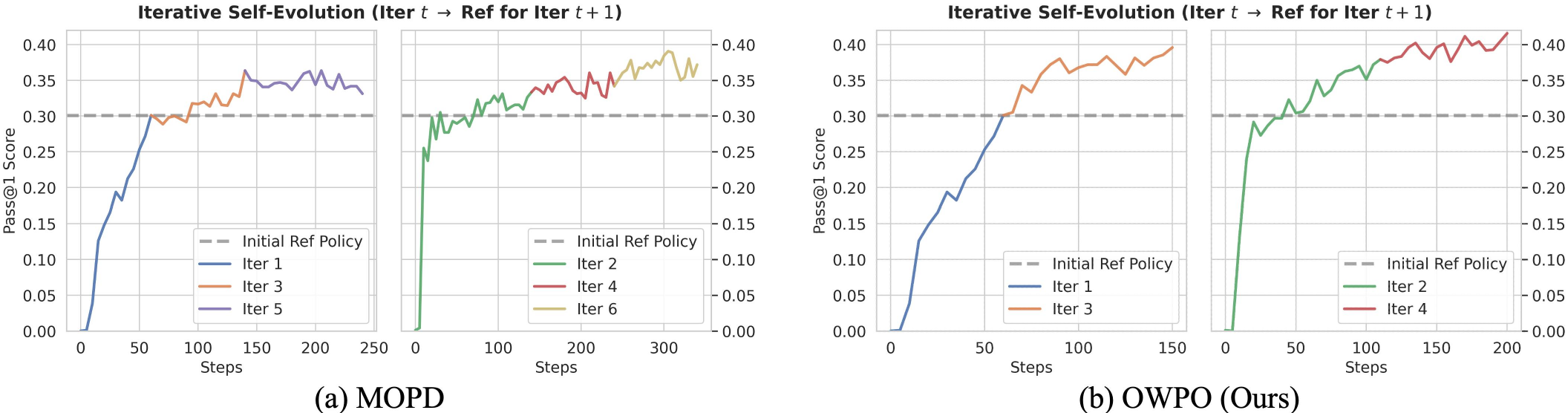}
    \caption{
    \textbf{Iterative Self-Evolution.}
    Comparison of \textbf{MOPD (left)} and \textbf{OWPO (right)} starting from a suboptimal prior ($\approx 30\%$, grey dashed line). We employ a stage-wise bootstrapping protocol: at the end of each iteration, the best checkpoint is frozen and serves as the $\pi_{\mathrm{ref}}$ for the subsequent iteration (e.g., the final model of Iter~1 becomes $\pi_{\mathrm{ref}}$ for Iter~2). OWPO demonstrates superior efficiency, reaching $\approx 40\%$ accuracy in just 4 iterations, whereas MOPD saturates at a lower $\approx 37\%$ accuracy despite requiring 6 iterations. This means that OWPO not only transcends the prior more effectively but also achieves better performance with reduced computational cost and  time.
    }
    \label{fig:iterative_suboptimal}
\end{figure*}

\begin{table}[t]
\caption{
    \textbf{Results on general science benchmarks.} Comparison of different baselines on GPQA and MMLU-Pro.
}
  \label{tab:general_benchmarks}
  \centering
  \setlength{\tabcolsep}{6pt}
  \resizebox{0.48\textwidth}{!}{
  \begin{tabular}{llcc}
  \toprule
  \textbf{Model} & \textbf{Method} & \textbf{GPQA} & \textbf{MMLU-Pro} \\ \midrule
  \multirow{6}{*}{\begin{tabular}[c]{@{}l@{}}Qwen-2.5-\\ Math-7B-Base\end{tabular}} 
   & GRPO       & 34.34 & 36.94 \\
   & DAPO       & 39.27 & 44.09 \\ \cline{2-4}
   & Sym-DAPO & 37.25 & 42.93 \\
   & MOPD       & 38.76 & 43.81 \\
   & OPD        & 39.39 & 43.34 \\
   & OWPO       & \textbf{41.03} & \textbf{46.85} \\
   \midrule
  \multirow{6}{*}{\begin{tabular}[c]{@{}l@{}}Qwen-3-\\ 8B-Base\end{tabular}} 
   & GRPO       & 50.44 & 64.71 \\
   & DAPO       & 52.27 & 65.29 \\ \cline{2-4}
   & Sym-DAPO & 50.12 & 65.58 \\
   & MOPD       & 50.51 & 65.99 \\
   & OPD        & 52.27 & 65.91 \\
   & OWPO       & \textbf{54.55} & \textbf{67.41} \\
   \bottomrule
  \end{tabular}
  }
\end{table}

\textbf{Training Dynamics.}
The learning curves in Figure~\ref{fig:curve_baseline} further highlight the optimization dynamics. Reference-guided methods (OPD, MOPD, OWPO) demonstrate a rapid initial performance surge compared to the pure exploration method (DAPO), validating the benefit of dense supervision. However, a clear divergence emerges in the later stages: Pure distillation (OPD) fluctuates around the reference performance; MOPD exceeds the reference but plateaus due to the regularization conflict. OWPO, conversely, maintains a continuous upward trajectory, effectively decoupling the optimization direction from the reference constraints and achieving a significantly higher convergence point.

\textbf{Generalization Capabilities.} We evaluate generalization on general science benchmarks (GPQA and MMLU-Pro) in Table~\ref{tab:general_benchmarks}. Despite training strictly on mathematical data, OWPO demonstrates superior generalization, significantly outperforming MOPD and DAPO. This indicates that the reasoning patterns reinforced by OWPO are robust and transferable, rather than overfitted to mathematical templates.

\subsection{Transcending Suboptimal Priors}
\label{subsec:transcending}
To enable \emph{safe self-evolution} despite imperfect priors, we initialize $\pi_{\mathrm{ref}}$ with a suboptimal DAPO checkpoint ($30.0\%$ on AIME24) rather than the best one ($37.19\%$). This rigorously tests if the algorithm can \emph{transcend} a weak prior without being constrained by the ``prior ceiling.''

\textbf{Iterative Self-Evolving Protocol.}
We compare OWPO against MOPD (the strongest baseline) using an identical iterative protocol. At the end of each stage, $\pi_{\mathrm{ref}}$ is updated via a ``hard swap'' to the best-performing checkpoint within that stage ($\pi_{\mathrm{ref}} \leftarrow \pi_{\theta}^{\mathrm{best}}$). This ensures both methods benefit from the same ``ratchet'' schedule, isolating performance differences to the underlying optimization dynamics.

\textbf{Results.}
As shown in Figure~\ref{fig:iterative_suboptimal}, both methods successfully break the performance ceiling of the initial weak reference via iterative bootstrapping. However, a significant divergence in the efficiency of self-evolution is observed. MOPD exhibits a slower rate of improvement, requiring \textbf{6 iterations} to reach a saturation point of approximately \textbf{37\%}. In contrast, OWPO maintains a steeper upward trajectory and converges to a superior Pass@1 score of \textbf{$\approx$ 40\%} within only \textbf{4 iterations}. This empirical evidence confirms that while symmetric distillation (MOPD) provides stability, it inherently limits the magnitude of \textbf{Superior deviations}. OWPO's one-way mechanism effectively ``locks in'' these deviations, enabling the model to transcend the suboptimal prior more rapidly and thoroughly.

\subsection{Ablation Studies}
To rigorously verify the necessity of the proposed asymmetric trust region, we adhere to the same \textit{suboptimal reference} setting as in Sec.~\ref{subsec:transcending} but perform single-stage training to isolate the mechanical contributions. We compare the full OWPO against three variants:

\begin{itemize}[leftmargin=*, nosep]
     \item \textbf{w/o Asym (Symmetric Reweighting):} Removes the dependency on the advantage sign. Weights are determined solely by the magnitude of deviation $|\delta|$, applying symmetric penalties regardless of whether the deviation is \textbf{Superior} or \textbf{Inferior}.
    \item \textbf{w/o Locking (Only Alignment):} Retains Accelerated Alignment for \textbf{Inferior deviations} ($w>1$) but disables Gain Locking for Superior ones ($w=1$ when $\delta>0$).
    \item \textbf{w/o Accel (Only Gain Locking):} Retains Gain Locking for \textbf{Superior deviations} ($w<1$) but removes Accelerated Alignment (setting $w=1$ when $\delta<0$).
\end{itemize}

\textbf{Analysis of Asymmetry.} Figure~\ref{fig:ablation} reveals that enforcing symmetric constraints causes severe performance regression. The \textbf{w/o Asym} variant suffers significant drops on both AIME24 and AIME25. Notably, on AIME25, symmetric reweighting yields a marginal gain ($+0.9\%$) compared to OWPO's fourfold improvement ($+3.8\%$). This confirms that symmetric regularization induces a "force-reversal" effect, erroneously penalizing the beneficial explorations necessary to outperform an imperfect reference.

\textbf{Analysis of Components.} Figure \ref{fig:ablation} shows both mechanisms prove indispensable. \textbf{w/o Locking} underperforms OWPO significantly, indicating that \textbf{Gain Locking} is crucial for retaining sparse high-reward signals (i.e., preserving Superior deviations). Similarly, \textbf{w/o Accel} fails to match the full method, suggesting that \textbf{Accelerated Alignment} is vital for efficiently pruning incorrect paths (i.e., correcting Inferior deviations). In summary, OWPO achieves optimal gains only by synergizing these two asymmetric mechanisms.

\section{Related Work}

\textbf{RLVR for Reasoning Scaling.} Recent reasoning-oriented post-training has increasingly adopted reinforcement learning with verifiable rewards (RLVR)~\cite{guo2025deepseek, jaech2024openai,yang2026look}, where external verifiers (e.g., code executors or math verifiers) provide outcome-level supervision. The OpenAI o1~\cite{jaech2024openai} series demonstrated a viable pathway for reinforcing long-chain reasoning behaviors through large-scale RL. Subsequently, DeepSeek-R1~\cite{guo2025deepseek} employed GRPO~\cite{shao2024deepseekmath} to scale training using solely verifiable signals, maintaining stability and continuous improvement through various mechanisms. To address the stability challenges~\cite{schulman2015trust} inherent in PPO~\cite{schulman2017proximal} and GRPO, subsequent research~\cite{chen2025minimax} has largely revolved around modifications to "trust regions and clipping forms." For instance, DAPO~\cite{yu2025dapo} mitigates training instability via decoupled clipping and dynamic sampling, while GSPO~\cite{zheng2025group} elevates importance ratios and clipping from the token level to the sequence level.

\begin{figure}[t]
    \centering
    \includegraphics[width=1\columnwidth]{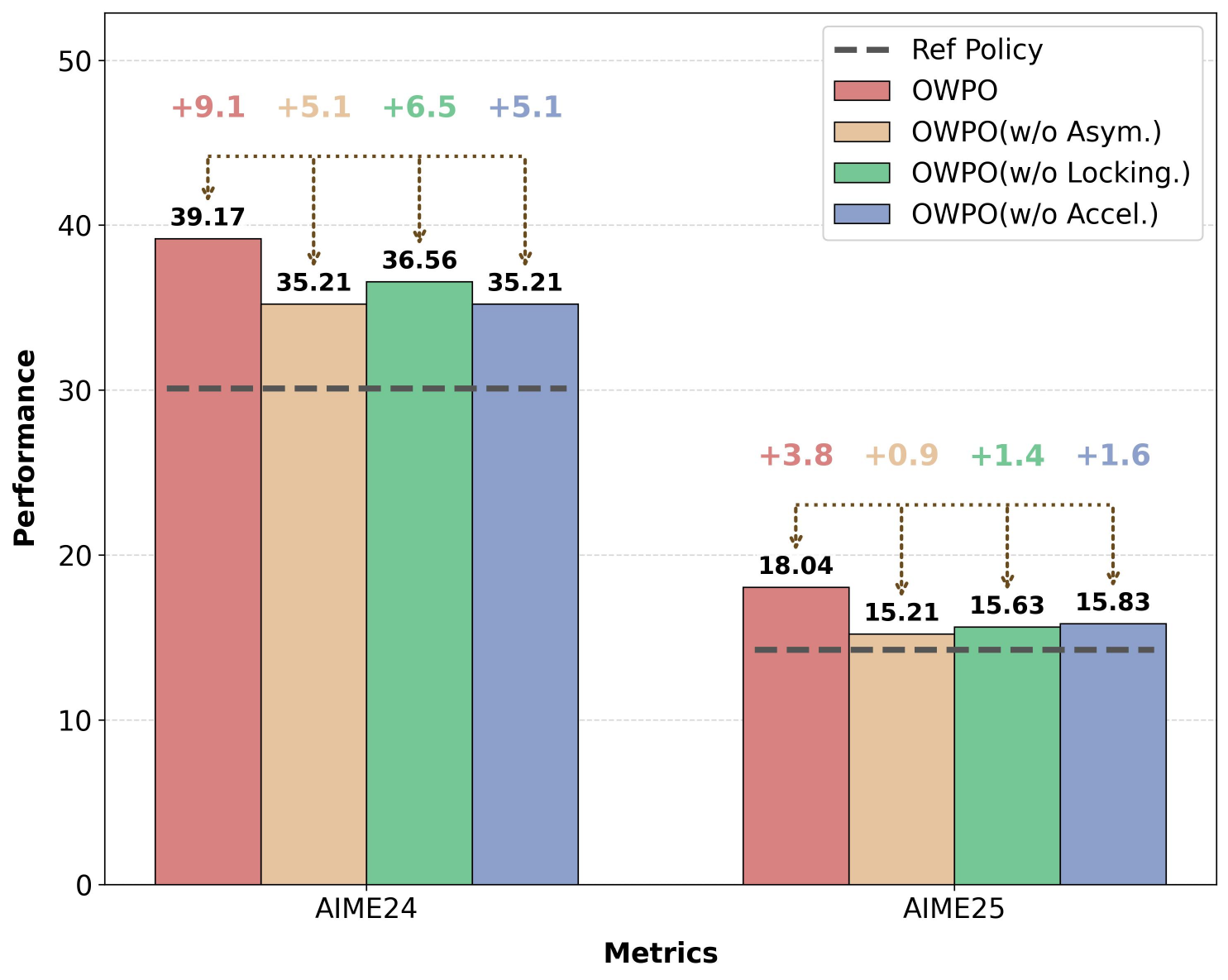}
    \caption{\textbf{Ablation study under the suboptimal reference setting.} We report the Pass@1 accuracy on AIME24 and AIME25 benchmarks. The grey dashed line indicates the baseline performance of the reference policy. We compare the full OWPO against variants removing the asymmetric design (\texttt{w/o Asym}), Gain Locking (\texttt{w/o Locking}), or Accelerated Alignment (\texttt{w/o Accel}).
    }
    \label{fig:ablation}
\end{figure}

\textbf{Distillation Meets RL.} Another coherent line of research leverages distillation to provide dense token-level signals, thereby enhancing sample efficiency and stability. On-Policy Distillation~\cite{lu2025onpolicydistillation} (also known as GKD~\cite{agarwal2024policy}) introduces teacher feedback on sequences generated by the student policy to mitigate distribution mismatch, validated by Qwen3~\cite{yang2025qwen3} as a standard recipe. Moving toward tighter integration, KDRL~\cite{xu2025kdrl} unifies KD and RL into a single objective; MiMo-V2-Flash introduces MOPD~\cite{xiao2026mimo}, a multi-teacher framework utilizing RL expert teachers for dense guidance; and SCoRe~\cite{lyu2025correction} combines correction distillation with short-term RL.
Unlike regression or symmetric KL approaches, OWPO uses the reference strictly to modulate update magnitude, establishing a direction-aware one-way trust region that prioritizes the verifier's guidance for sustainable self-evolution.

\section{Conclusion}
In this paper, we introduced \textbf{One-Way Policy Optimization (OWPO)}, a framework that resolves the stability-exploration conflict in RLVR by decoupling optimization direction from update magnitude. By constructing a direction-aware trust region, OWPO applies \textbf{Accelerated Alignment} to correct inferior deviations while executing \textbf{Gain Locking} to preserve superior deviations, effectively preventing the \textbf{Force Reversal} effect inherent in symmetric KL regularization. Experiments demonstrate that OWPO consistently outperforms strong baselines (e.g., MOPD, DAPO) and enables models to stably transcend suboptimal priors via iterative bootstrapping—highlighting directional awareness as a fundamental principle for scalable self-evolution.

\section*{Limitations}
OWPO has several limitations. First, our evaluation mainly focuses on math and general reasoning tasks with reliable verifiers; extending it to coding, tool-use, and interactive RLVR remains future work. Second, our theory characterizes local first-order update dynamics rather than the full global behavior of the clipped objective. Third, OWPO still depends on the reference policy and related hyperparameters, such as refresh frequency and weighting bounds, whose effects warrant further study.

\section*{Acknowledgment}
This work was supported in part by the Guangdong Grants (Grant No. 2023ZT10X075), the Natural Science Foundation of China (No. 62332002, 62425101), and the Shenzhen Science and Technology Program (KQTD20240729102051063). It was also supported by the China Postdoctoral Science Foundation under Grant Numbers BX20240013 and 2024M760113.

\section*{Impact Statement}
This paper proposes One-Way Policy Optimization (OWPO) for reinforcement learning with verifiable rewards, aiming to improve training stability and enable iterative self-improvement on verifiable reasoning tasks (e.g., math and code). This can benefit reliability and efficiency in building reasoning-oriented language models.

Broader impacts are mixed: stronger reasoning models may improve education and scientific/software tooling, but could also amplify misuse or over-reliance if outputs are deployed without adequate checking. OWPO does not involve new data collection; we recommend pairing OWPO-trained models with strong verifiers, rigorous evaluation, and responsible deployment practices in high-stakes settings.

\bibliography{example_paper}

@article{jaech2024openai,
  title={Openai o1 system card},
  author={Jaech, Aaron and Kalai, Adam and Lerer, Adam and Richardson, Adam and El-Kishky, Ahmed and Low, Aiden and Helyar, Alec and Madry, Aleksander and Beutel, Alex and Carney, Alex and others},
  journal={arXiv preprint arXiv:2412.16720},
  year={2024}
}

@article{guo2025deepseek,
  title={Deepseek-r1: Incentivizing reasoning capability in llms via reinforcement learning},
  author={Guo, Daya and Yang, Dejian and Zhang, Haowei and Song, Junxiao and Zhang, Ruoyu and Xu, Runxin and Zhu, Qihao and Ma, Shirong and Wang, Peiyi and Bi, Xiao and others},
  journal={arXiv preprint arXiv:2501.12948},
  year={2025}
}

@article{shao2024deepseekmath,
  title={Deepseekmath: Pushing the limits of mathematical reasoning in open language models},
  author={Shao, Zhihong and Wang, Peiyi and Zhu, Qihao and Xu, Runxin and Song, Junxiao and Bi, Xiao and Zhang, Haowei and Zhang, Mingchuan and Li, YK and Wu, Yang and others},
  journal={arXiv preprint arXiv:2402.03300},
  year={2024}
}

@article{yu2025dapo,
  title={Dapo: An open-source llm reinforcement learning system at scale},
  author={Yu, Qiying and Zhang, Zheng and Zhu, Ruofei and Yuan, Yufeng and Zuo, Xiaochen and Yue, Yu and Dai, Weinan and Fan, Tiantian and Liu, Gaohong and Liu, Lingjun and others},
  journal={arXiv preprint arXiv:2503.14476},
  year={2025}
}

@article{zheng2025group,
  title={Group sequence policy optimization},
  author={Zheng, Chujie and Liu, Shixuan and Li, Mingze and Chen, Xiong-Hui and Yu, Bowen and Gao, Chang and Dang, Kai and Liu, Yuqiong and Men, Rui and Yang, An and others},
  journal={arXiv preprint arXiv:2507.18071},
  year={2025}
}

@article{chen2025minimax,
  title={MiniMax-M1: Scaling Test-Time Compute Efficiently with Lightning Attention},
  author={Chen, Aili and Li, Aonian and Gong, Bangwei and Jiang, Binyang and Fei, Bo and Yang, Bo and Shan, Boji and Yu, Changqing and Wang, Chao and Zhu, Cheng and others},
  journal={arXiv preprint arXiv:2506.13585},
  year={2025}
}

@inproceedings{agarwal2024policy,
  title={On-policy distillation of language models: Learning from self-generated mistakes},
  author={Agarwal, Rishabh and Vieillard, Nino and Zhou, Yongchao and Stanczyk, Piotr and Garea, Sabela Ramos and Geist, Matthieu and Bachem, Olivier},
  booktitle={The twelfth international conference on learning representations},
  year={2024}
}

@article{yang2025qwen3,
  title={Qwen3 technical report},
  author={Yang, An and Li, Anfeng and Yang, Baosong and Zhang, Beichen and Hui, Binyuan and Zheng, Bo and Yu, Bowen and Gao, Chang and Huang, Chengen and Lv, Chenxu and others},
  journal={arXiv preprint arXiv:2505.09388},
  year={2025}
}

@article{xu2025kdrl,
  title={KDRL: Post-Training Reasoning LLMs via Unified Knowledge Distillation and Reinforcement Learning},
  author={Xu, Hongling and Zhu, Qi and Deng, Heyuan and Li, Jinpeng and Hou, Lu and Wang, Yasheng and Shang, Lifeng and Xu, Ruifeng and Mi, Fei},
  journal={arXiv preprint arXiv:2506.02208},
  year={2025}
}

@article{xiao2026mimo,
  title={MiMo-V2-Flash Technical Report},
  author={Xiao, Bangjun and Xia, Bingquan and Yang, Bo and Gao, Bofei and Shen, Bowen and Zhang, Chen and He, Chenhong and Lou, Chiheng and Luo, Fuli and Wang, Gang and others},
  journal={arXiv preprint arXiv:2601.02780},
  year={2026}
}

@article{lyu2025correction,
  title={From correction to mastery: Reinforced distillation of large language model agents},
  author={Lyu, Yuanjie and Wang, Chengyu and Huang, Jun and Xu, Tong},
  journal={arXiv preprint arXiv:2509.14257},
  year={2025}
}

@article{schulman2017proximal,
  title={Proximal policy optimization algorithms},
  author={Schulman, John and Wolski, Filip and Dhariwal, Prafulla and Radford, Alec and Klimov, Oleg},
  journal={arXiv preprint arXiv:1707.06347},
  year={2017}
}

@article{team2025kimi,
  title={Kimi k2: Open agentic intelligence},
  author={Team, Kimi and Bai, Yifan and Bao, Yiping and Chen, Guanduo and Chen, Jiahao and Chen, Ningxin and Chen, Ruijue and Chen, Yanru and Chen, Yuankun and Chen, Yutian and others},
  journal={arXiv preprint arXiv:2507.20534},
  year={2025}
}

@article{comanici2025gemini,
  title={Gemini 2.5: Pushing the frontier with advanced reasoning, multimodality, long context, and next generation agentic capabilities},
  author={Comanici, Gheorghe and Bieber, Eric and Schaekermann, Mike and Pasupat, Ice and Sachdeva, Noveen and Dhillon, Inderjit and Blistein, Marcel and Ram, Ori and Zhang, Dan and Rosen, Evan and others},
  journal={arXiv preprint arXiv:2507.06261},
  year={2025}
}

@article{hochlehnert2025sober,
  title={A sober look at progress in language model reasoning: Pitfalls and paths to reproducibility},
  author={Hochlehnert, Andreas and Bhatnagar, Hardik and Udandarao, Vishaal and Albanie, Samuel and Prabhu, Ameya and Bethge, Matthias},
  journal={arXiv preprint arXiv:2504.07086},
  year={2025}
}

@inproceedings{lightman2023let,
  title={Let's verify step by step},
  author={Lightman, Hunter and Kosaraju, Vineet and Burda, Yuri and Edwards, Harrison and Baker, Bowen and Lee, Teddy and Leike, Jan and Schulman, John and Sutskever, Ilya and Cobbe, Karl},
  booktitle={The Twelfth International Conference on Learning Representations},
  year={2023}
}

@article{williams1992simple,
  title={Simple statistical gradient-following algorithms for connectionist reinforcement learning},
  author={Williams, Ronald J},
  journal={Machine learning},
  volume={8},
  number={3},
  pages={229--256},
  year={1992},
  publisher={Springer}
}

@article{zelikman2022star,
  title={Star: Bootstrapping reasoning with reasoning},
  author={Zelikman, Eric and Wu, Yuhuai and Mu, Jesse and Goodman, Noah},
  journal={Advances in Neural Information Processing Systems},
  volume={35},
  pages={15476--15488},
  year={2022}
}

@article{gulcehre2023reinforced,
  title={Reinforced self-training (rest) for language modeling},
  author={Gulcehre, Caglar and Paine, Tom Le and Srinivasan, Srivatsan and Konyushkova, Ksenia and Weerts, Lotte and Sharma, Abhishek and Siddhant, Aditya and Ahern, Alex and Wang, Miaosen and Gu, Chenjie and others},
  journal={arXiv preprint arXiv:2308.08998},
  year={2023}
}

@article{wang2024mmlu,
  title={Mmlu-pro: A more robust and challenging multi-task language understanding benchmark},
  author={Wang, Yubo and Ma, Xueguang and Zhang, Ge and Ni, Yuansheng and Chandra, Abhranil and Guo, Shiguang and Ren, Weiming and Arulraj, Aaran and He, Xuan and Jiang, Ziyan and others},
  journal={Advances in Neural Information Processing Systems},
  volume={37},
  pages={95266--95290},
  year={2024}
}

@inproceedings{schulman2015trust,
  title={Trust region policy optimization},
  author={Schulman, John and Levine, Sergey and Abbeel, Pieter and Jordan, Michael and Moritz, Philipp},
  booktitle={International conference on machine learning},
  pages={1889--1897},
  year={2015},
  organization={PMLR}
}

@inproceedings{sheng2025hybridflow,
  title={Hybridflow: A flexible and efficient rlhf framework},
  author={Sheng, Guangming and Zhang, Chi and Ye, Zilingfeng and Wu, Xibin and Zhang, Wang and Zhang, Ru and Peng, Yanghua and Lin, Haibin and Wu, Chuan},
  booktitle={Proceedings of the Twentieth European Conference on Computer Systems},
  pages={1279--1297},
  year={2025}
}

@article{hui2024qwen2,
  title={Qwen2. 5-coder technical report},
  author={Hui, Binyuan and Yang, Jian and Cui, Zeyu and Yang, Jiaxi and Liu, Dayiheng and Zhang, Lei and Liu, Tianyu and Zhang, Jiajun and Yu, Bowen and Lu, Keming and others},
  journal={arXiv preprint arXiv:2409.12186},
  year={2024}
}

@article{yang2024qwen2,
  title={Qwen2. 5-math technical report: Toward mathematical expert model via self-improvement},
  author={Yang, An and Zhang, Beichen and Hui, Binyuan and Gao, Bofei and Yu, Bowen and Li, Chengpeng and Liu, Dayiheng and Tu, Jianhong and Zhou, Jingren and Lin, Junyang and others},
  journal={arXiv preprint arXiv:2409.12122},
  year={2024}
}

@article{wei2022chain,
  title={Chain-of-thought prompting elicits reasoning in large language models},
  author={Wei, Jason and Wang, Xuezhi and Schuurmans, Dale and Bosma, Maarten and Xia, Fei and Chi, Ed and Le, Quoc V and Zhou, Denny and others},
  journal={Advances in neural information processing systems},
  volume={35},
  pages={24824--24837},
  year={2022}
}

@inproceedings{kwon2023efficient,
  title={Efficient memory management for large language model serving with pagedattention},
  author={Kwon, Woosuk and Li, Zhuohan and Zhuang, Siyuan and Sheng, Ying and Zheng, Lianmin and Yu, Cody Hao and Gonzalez, Joseph and Zhang, Hao and Stoica, Ion},
  booktitle={Proceedings of the 29th symposium on operating systems principles},
  pages={611--626},
  year={2023}
}

@article{ma2026fipo,
  title={Fipo: Eliciting deep reasoning with future-kl influenced policy optimization},
  author={Ma, Chiyu and Yang, Shuo and Huang, Kexin and Lu, Jinda and Meng, Haoming and Wang, Shangshang and Ding, Bolin and Vosoughi, Soroush and Wang, Guoyin and Zhou, Jingren},
  journal={arXiv preprint arXiv:2603.19835},
  year={2026}
}

@article{meng2026sparse,
  title={Sparse but critical: A token-level analysis of distributional shifts in RLVR fine-tuning of LLMs},
  author={Meng, Haoming and Huang, Kexin and Wei, Shaohang and Ma, Chiyu and Yang, Shuo and Wang, Xue and Wang, Guoyin and Ding, Bolin and Zhou, Jingren},
  journal={arXiv preprint arXiv:2603.22446},
  year={2026}
}

@article{huang2026direction,
  title={On the direction of RLVR updates for LLM reasoning: Identification and exploitation},
  author={Huang, Kexin and Meng, Haoming and Wu, Junkang and Lu, Jinda and Ma, Chiyu and Chen, Ziqian and Wang, Xue and Ding, Bolin and Wu, Jiancan and Wang, Xiang and others},
  journal={arXiv preprint arXiv:2603.22117},
  year={2026}
}

@inproceedings{yang2026look,
  title={Look-back: Implicit visual re-focusing in mllm reasoning},
  author={Yang, Shuo and Niu, Yuwei and Liu, Yuyang and Ye, Yang and Lin, Bin and Yuan, Li},
  booktitle={Proceedings of the AAAI Conference on Artificial Intelligence},
  volume={40},
  number={14},
  pages={11694--11702},
  year={2026}
}

@article{lu2025onpolicydistillation,
  author = {Kevin Lu and {Thinking Machines Lab}},
  title = {On-Policy Distillation},
  journal = {Thinking Machines Lab: Connectionism},
  year = {2025},
  note = {https://thinkingmachines.ai/blog/on-policy-distillation},
  doi = {10.64434/tml.20251026},
}
\bibliographystyle{icml2026}

\newpage
\appendix
\onecolumn

\section{Proofs}
\label{app:proofs}

This appendix provides proofs for the results in Section~\ref{sec:theory}.
Throughout, expectations are taken over trajectories $\tau \sim \pi_{\theta_{\mathrm{old}}}$, i.e., the behavior policy is fixed as in standard surrogate analyses.
We adopt the convention $\operatorname{sgn}(0)=0$.

\paragraph{Definitions (for convenience).}
Recall the directional deviation
\begin{equation}
\delta_t(\theta) \triangleq \operatorname{sgn}(A_t)\cdot \log \frac{\pi_\theta(y_t\mid s_t)}{\pi_{\mathrm{ref}}(y_t\mid s_t)},
\end{equation}
and the one-way weight (implemented with stop-gradient)
\begin{equation}
w_t \triangleq \sg\!\left[\clip\!\left(\exp(-\delta_t(\theta)),\,\epsilon_{\mathrm{low}},\,\epsilon_{\mathrm{high}}\right)\right],
\qquad 0<\epsilon_{\mathrm{low}}<1<\epsilon_{\mathrm{high}}.
\end{equation}
In practice, $\sg[\cdot]$ detaches the weight from backpropagation, i.e., $\nabla_\theta w_t \equiv 0$.

We also define the ratio-free policy-gradient surrogate objective
\begin{equation}
J_{\mathrm{PG}}(\theta) \triangleq \mathbb{E}_{\tau \sim \pi_{\theta_{\mathrm{old}}}}\left[\sum_t A_t \log \pi_\theta(y_t\mid s_t)\right],
\quad
\nabla_\theta J_{\mathrm{PG}}(\theta)=\mathbb{E}\left[\sum_t A_t \nabla_\theta \log \pi_\theta(y_t\mid s_t)\right].
\end{equation}

\subsection{Proof of Lemma~\ref{lem:equivalence}}
\label{app:proof_lemma}

\begin{lemma}[Local Directional Regularization]
This proof corresponds to the local first-order update direction of the clipped PPO/DAPO surrogate. Let $\pi_{\mathrm{ref}}$ be fixed. Under the stop-gradient implementation of $w_t$ (i.e., treating $w_t$ as constant during backpropagation),
the OWPO update gradient
\begin{equation}
\nabla_\theta J_{\mathrm{OWPO}}(\theta)
=\mathbb{E}_{\tau \sim \pi_{\theta_{\mathrm{old}}}}\!\left[\sum_t w_t\, A_t \nabla_\theta \log \pi_\theta(y_t\mid s_t)\right]
\end{equation}
is equal to the gradient of a regularized objective $\nabla_\theta\!\left(J_{\mathrm{PG}}(\theta)+\Omega(\theta)\right)$, where
\begin{equation}
\Omega(\theta)=\mathbb{E}_{\tau \sim \pi_{\theta_{\mathrm{old}}}}\!\left[\sum_t |A_t|\,\psi(\delta_t(\theta))\right],
\qquad
\psi(\delta)=\int_{0}^{\delta}\big(w(u)-1\big)\,du,
\end{equation}
and $w(\cdot)$ denotes the scalar weight function induced by $\clip(\exp(-\cdot),\epsilon_{\mathrm{low}},\epsilon_{\mathrm{high}})$.
\end{lemma}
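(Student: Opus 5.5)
The plan is a direct chain-rule computation, token by token, comparing the gradient of $|A_t|\,\psi(\delta_t(\theta))$ with the extra term $(w_t-1)A_t\nabla_\theta\log\pi_\theta(y_t\mid s_t)$.

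\textbf{Step 1: setup.} I write $w_t A_t\nabla\log\pi_\theta = A_t\nabla\log\pi_\theta + (w_t-1)A_t\nabla\log\pi_\theta$. The first piece, summed and averaged under the fixed behavior policy $\pi_{\theta_{\mathrm{old}}}$, is exactly $\nabla_\theta J_{\mathrm{PG}}$. Two facts justify differentiating the expectations termwise:
\begin{itemize}
\item the sampling distribution does not depend on $\theta$;
\item $\pi_{\mathrm{ref}}$ is fixed.
\end{itemize}
It therefore remains to show, pointwise for each token, that
\[
\nabla_\theta\big(|A_t|\psi(\delta_t(\theta))\big)=(w_t-1)A_t\nabla_\theta\log\pi_\theta(y_t\mid s_t).
\]

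\textbf{Step 2: chain rule.} The map $u\mapsto \clip(e^{-u},\epsilon_{\mathrm{low}},\epsilon_{\mathrm{high}})$ is continuous. By the fundamental theorem of calculus, $\psi$ is therefore $C^1$ with $\psi'(\delta)=w(\delta)-1$ everywhere, including at the clipping kinks. Since $\pi_{\mathrm{ref}}$ is fixed, $\nabla_\theta\delta_t=\sgn(A_t)\nabla_\theta\log\pi_\theta(y_t\mid s_t)$. Hence
\[
\nabla_\theta\big(|A_t|\psi(\delta_t)\big)=|A_t|\big(w(\delta_t)-1\big)\sgn(A_t)\nabla_\theta\log\pi_\theta .
\]
Using $|A_t|\sgn(A_t)=A_t$ turns this into the desired expression with $w_t=w(\delta_t(\theta))$ evaluated at the current $\theta$. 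This is consistent with the stop-gradient convention: that convention only says $w_t$ is not differentiated in the OWPO surrogate, while its numerical value equals $w(\delta_t(\theta))$.

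\textbf{Step 3: the $A_t=0$ case and bookkeeping.} With $\sgn(0)=0$, we get $\delta_t=0$ and $\psi(0)=0$. Both sides then vanish, since the token contributes $A_t=0$ to the OWPO gradient regardless of $w_t$. Summing over $t$ and taking the expectation gives $\nabla(J_{\mathrm{PG}}+\Omega)=\nabla J_{\mathrm{OWPO}}$ in the local regime.

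\textbf{Main obstacle.} The delicate point is justifying that the OWPO gradient has the stated form at all, i.e.\ the local first-order reduction. I would argue it as follows. At $\theta=\theta_{\mathrm{old}}$ we have $r_t=1$, which lies strictly inside the clip interval. The $\min$ therefore selects the unclipped term $r_tA_t$, and $\nabla r_t=r_t\nabla\log\pi_\theta=\nabla\log\pi_\theta$. I would state this as an evaluation at $\theta=\theta_{\mathrm{old}}$, or to first order near it, rather than as a global identity. A second subtlety is the meaning of "equality" with $w_t$ held fixed: the identity holds at each $\theta$ where both gradients are evaluated, with $w_t$ taking its current numerical value. This I would make explicit.
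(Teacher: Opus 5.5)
Your proposal is correct and follows essentially the same route as the paper. Both move $\nabla_\theta$ inside the expectation over the fixed behavior policy, apply the chain rule with $\psi'=w-1$ (justified by continuity of the clipped weight and the fundamental theorem of calculus) and $\nabla_\theta\delta_t=\sgn(A_t)\nabla_\theta\log\pi_\theta$, collapse $|A_t|\sgn(A_t)=A_t$, and add $\nabla_\theta J_{\mathrm{PG}}$. Your explicit handling of the $A_t=0$ case and of the clipped surrogate at $\theta=\theta_{\mathrm{old}}$ (where $r_t=1$ lies strictly inside the clip interval) just spells out what the paper covers through the $\sgn(0)=0$ convention and its ``local first-order regime'' framing.
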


\begin{proof}
We compute $\nabla_\theta \Omega(\theta)$. Since the expectation is taken over the fixed behavior policy $\pi_{\theta_{\mathrm{old}}}$,
we may move $\nabla_\theta$ inside the expectation under standard regularity assumptions (e.g., dominated convergence):
\begin{equation}
\nabla_\theta \Omega(\theta)
=
\mathbb{E}\left[\nabla_\theta \sum_t |A_t|\,\psi(\delta_t(\theta))\right]
=
\mathbb{E}\left[\sum_t |A_t|\,\psi'(\delta_t(\theta))\,\nabla_\theta \delta_t(\theta)\right].
\end{equation}
Next, because $\pi_{\mathrm{ref}}$ is fixed,
\begin{align}
\nabla_\theta \delta_t(\theta)
&=
\nabla_\theta\!\left(\operatorname{sgn}(A_t)\left[\log \pi_\theta(y_t\mid s_t)-\log \pi_{\mathrm{ref}}(y_t\mid s_t)\right]\right) \\
&=
\operatorname{sgn}(A_t)\,\nabla_\theta \log \pi_\theta(y_t\mid s_t).
\end{align}
By definition, $\psi'(\delta)=w(\delta)-1$. Substituting this relationship and the expression for $\nabla_\theta \delta_t(\theta)$ back into the equation for $\nabla_\theta \Omega(\theta)$, and letting $w_t = w(\delta_t(\theta))$, we obtain:
\begin{equation}
\nabla_\theta \Omega(\theta)
=
\mathbb{E}\left[\sum_t |A_t|\,(w(\delta_t(\theta))-1)\,\operatorname{sgn}(A_t)\,\nabla_\theta \log \pi_\theta(y_t\mid s_t)\right].
\end{equation}

Using the identity $|x|\operatorname{sgn}(x)=x$ (with $\operatorname{sgn}(0)=0$) and letting $w_t = w(\delta_t(\theta))$, this simplifies to:
\begin{equation}
\nabla_\theta \Omega(\theta)
=
\mathbb{E}\left[\sum_t (w_t-1)\,A_t\,\nabla_\theta \log \pi_\theta(y_t\mid s_t)\right].
\end{equation}
Finally, adding $\nabla_\theta J_{\mathrm{PG}}(\theta)=\mathbb{E}[\sum_t A_t \nabla_\theta \log \pi_\theta]$ yields
\begin{align}
\nabla_\theta\big(J_{\mathrm{PG}}(\theta)+\Omega(\theta)\big)
&=
\mathbb{E}\left[\sum_t \left(1+w_t-1\right)A_t\,\nabla_\theta \log \pi_\theta(y_t\mid s_t)\right] \\
&=
\mathbb{E}\left[\sum_t w_t\,A_t\,\nabla_\theta \log \pi_\theta(y_t\mid s_t)\right]
=
\nabla_\theta J_{\mathrm{OWPO}}(\theta),
\end{align}
which completes the proof.

\paragraph{Remark on Smoothness.}
The weight function $w(\delta)=\clip(\exp(-\delta),\epsilon_{\mathrm{low}},\epsilon_{\mathrm{high}})$ is continuous globally (despite having kinks at the clipping thresholds). Consequently, by the Fundamental Theorem of Calculus, the potential function $\psi(\delta)$ is continuously differentiable ($\mathcal{C}^1$) everywhere, with $\psi'(\delta) = w(\delta)-1$ well-defined for all $\delta \in \mathbb{R}$.
\end{proof}

\subsection{Proof of Proposition~\ref{prop:dynamics}}
\label{app:proof_prop}

\begin{proposition}[Properties of One-way Dynamics]
Assuming $0<\epsilon_{\mathrm{low}}<1<\epsilon_{\mathrm{high}}$, the effective progress coefficient
\begin{equation}
g_{\mathrm{OWPO}}(\delta)\triangleq w(\delta)=\clip\!\left(\exp(-\delta),\,\epsilon_{\mathrm{low}},\,\epsilon_{\mathrm{high}}\right)
\end{equation}
satisfies:
(i) \textbf{Non-reversal:} $\forall \delta\in\mathbb{R}$, $g_{\mathrm{OWPO}}(\delta)\ge \epsilon_{\mathrm{low}}>0$;
(ii) \textbf{Asymmetry:} $g_{\mathrm{OWPO}}(\delta)>1$ for $\delta<0$, and $g_{\mathrm{OWPO}}(\delta)\in[\epsilon_{\mathrm{low}},1)$ for $\delta>0$.
\end{proposition}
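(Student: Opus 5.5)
The plan is to argue directly from the definition $g_{\mathrm{OWPO}}(\delta)=w(\delta)=\clip(\exp(-\delta),\epsilon_{\mathrm{low}},\epsilon_{\mathrm{high}})$, which Lemma~\ref{lem:equivalence} identifies with the effective progress coefficient. I will use two elementary facts. First, $\clip(x,a,b)=\min(\max(x,a),b)$ lies in $[a,b]$ whenever $a\le b$, and it is monotone non-decreasing in $x$. Second, $u\mapsto \exp(-u)$ is strictly decreasing, with $\exp(0)=1$. No expectations or gradients enter; the statement is purely about the scalar function $w$.

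For \textbf{Non-reversal}, the range property of the clip gives $w(\delta)\ge\epsilon_{\mathrm{low}}$ for every $\delta\in\mathbb{R}$. The hypothesis $\epsilon_{\mathrm{low}}>0$ then gives strict positivity. The hypothesis $\epsilon_{\mathrm{low}}<\epsilon_{\mathrm{high}}$, which follows from $\epsilon_{\mathrm{low}}<1<\epsilon_{\mathrm{high}}$, ensures the clip interval is nonempty, so the range property is legitimate.

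For \textbf{Asymmetry}, I split on the sign of $\delta$.

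If $\delta<0$, then $x:=\exp(-\delta)>1$. I will show $\clip(x,\epsilon_{\mathrm{low}},\epsilon_{\mathrm{high}})>1$ by cases. If $x\le\epsilon_{\mathrm{high}}$, the clip returns $x$ itself, since $x>1>\epsilon_{\mathrm{low}}$, and so the value exceeds $1$. If $x>\epsilon_{\mathrm{high}}$, the clip returns $\epsilon_{\mathrm{high}}>1$.

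If $\delta>0$, then $x\in(0,1)$. If $x\ge\epsilon_{\mathrm{low}}$, the clip returns $x\in[\epsilon_{\mathrm{low}},1)$; no upper clipping occurs because $x<1<\epsilon_{\mathrm{high}}$. Otherwise it returns $\epsilon_{\mathrm{low}}\in[\epsilon_{\mathrm{low}},1)$. In both subcases $w(\delta)\in[\epsilon_{\mathrm{low}},1)$.

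The only point needing care is the strictness of the inequalities near the thresholds. In particular, the bound $g>1$ for $\delta<0$ must survive upper clipping, and it does precisely because of the strict hypothesis $\epsilon_{\mathrm{high}}>1$; likewise $g<1$ for $\delta>0$ uses $\epsilon_{\mathrm{low}}<1$. I expect no real obstacle beyond tracking these cases. As an optional remark, the same case analysis shows $w(0)=1$, which aligns with the sign convention $\sgn(0)=0$ adopted in the appendix.
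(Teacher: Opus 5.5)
Your proposal is correct and takes essentially the same route as the paper. Non-reversal follows from the lower clipping bound together with $\epsilon_{\mathrm{low}}>0$, and asymmetry follows from splitting on the sign of $\delta$ (so that $e^{-\delta}>1$ or $e^{-\delta}\in(0,1)$) and using $\epsilon_{\mathrm{low}}<1<\epsilon_{\mathrm{high}}$ to keep the clipped value on the correct side of $1$; your sub-cases on whether the clip is active just spell out what the paper asserts in one line each.
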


\begin{proof}
By definition,
\begin{equation}
g_{\mathrm{OWPO}}(\delta)=\max\left(\epsilon_{\mathrm{low}},\,\min\left(\epsilon_{\mathrm{high}},\,e^{-\delta}\right)\right).
\end{equation}

\noindent\textbf{(i) Non-reversal.}
Since $e^{-\delta}>0$ for all $\delta\in\mathbb{R}$ and $\epsilon_{\mathrm{low}}>0$, clipping implies
\begin{equation}
g_{\mathrm{OWPO}}(\delta)\ge \epsilon_{\mathrm{low}}>0,\quad \forall \delta\in\mathbb{R}.
\end{equation}
Hence the coefficient never changes sign and cannot induce force reversal.

\noindent\textbf{(ii) Asymmetry.}
If $\delta<0$, then $-\delta>0$ and thus $e^{-\delta}>1$. Because $\epsilon_{\mathrm{high}}>1$, the clipped value satisfies
$g_{\mathrm{OWPO}}(\delta)\in(1,\epsilon_{\mathrm{high}}]$, hence $g_{\mathrm{OWPO}}(\delta)>1$.
If $\delta>0$, then $e^{-\delta}\in(0,1)$ and since $0<\epsilon_{\mathrm{low}}<1$, the clipped value satisfies
$g_{\mathrm{OWPO}}(\delta)\in[\epsilon_{\mathrm{low}},1)$.
(The boundary case $\delta=0$ gives $g_{\mathrm{OWPO}}(0)=1$ under $0<\epsilon_{\mathrm{low}}<1<\epsilon_{\mathrm{high}}$.)
\end{proof}

\subsection{Proof of Corollary~\ref{cor:variance}}
\label{app:proof_cor}

\begin{corollary}[Variance Bounding]
Let $G_t \triangleq A_t\,\nabla_\theta \log \pi_\theta(y_t\mid s_t)$ denote the raw gradient estimator.
Then the second moment of the weighted gradient satisfies
\begin{equation}
\epsilon_{\mathrm{low}}^2\,\mathbb{E}\!\left[\|G_t\|^2\right]
\le
\mathbb{E}\!\left[\|w_t\,G_t\|^2\right]
\le
\epsilon_{\mathrm{high}}^2\,\mathbb{E}\!\left[\|G_t\|^2\right].
\end{equation}
Moreover, conditioned on beneficial deviations ($\delta_t>0$), we have $\mathbb{E}[\|w_t G_t\|^2\mid \delta_t>0]\le \mathbb{E}[\|G_t\|^2\mid \delta_t>0]$.
\end{corollary}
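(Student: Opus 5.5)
The proof is a pointwise sandwich bound that is then integrated. It uses only the range of the weight established in Proposition~\ref{prop:dynamics}. First, since $w_t$ is a scalar (detached by $\sg[\cdot]$, so it is just a nonnegative number for each sample), we have the identity $\|w_t G_t\|^2 = w_t^2\,\|G_t\|^2$ for every trajectory and token.

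Next, by definition of the clip operator together with part (i) of Proposition~\ref{prop:dynamics}, every realization satisfies
\[
\epsilon_{\mathrm{low}} \le w_t \le \epsilon_{\mathrm{high}}.
\]
Because $\epsilon_{\mathrm{low}}>0$, both sides are nonnegative, so squaring preserves the order and gives $\epsilon_{\mathrm{low}}^2 \le w_t^2 \le \epsilon_{\mathrm{high}}^2$. Multiplying by $\|G_t\|^2\ge 0$ yields the pointwise inequality
\[
\epsilon_{\mathrm{low}}^2\|G_t\|^2\le \|w_tG_t\|^2\le \epsilon_{\mathrm{high}}^2\|G_t\|^2 .
\]
Taking expectations over $\tau\sim\pi_{\theta_{\mathrm{old}}}$ then gives the two-sided bound, using monotonicity and linearity of expectation. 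The only assumption needed is that $\mathbb{E}[\|G_t\|^2]<\infty$, which I would state explicitly so the expectations are well defined. If it fails, the inequalities still hold in the extended reals.

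For the conditional statement, I restrict to the event $\{\delta_t>0\}$, assuming it has positive probability so the conditional expectation is defined. On this event, part (ii) of Proposition~\ref{prop:dynamics} gives $w_t\in[\epsilon_{\mathrm{low}},1)$, hence $w_t^2<1$. This gives the pointwise inequality $\|w_tG_t\|^2\le\|G_t\|^2$ on the event. Integrating against the conditional law, by monotonicity of conditional expectation, gives the claim.

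I expect no real obstacle. The only subtlety is interpretive: the weight is evaluated at the current $\theta$ (equivalently $\theta_{\mathrm{old}}$ in the local regime), so $\delta_t$ and $w_t$ are fixed functions of the sample. This makes conditioning on $\delta_t>0$ legitimate. I would also remark that the bounds concern the second moment, not the centered variance; the statement is phrased accordingly.
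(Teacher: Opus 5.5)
Your proposal is correct and follows the paper's proof: bound $w_t$ pointwise in $[\epsilon_{\mathrm{low}},\epsilon_{\mathrm{high}}]$, and in $[\epsilon_{\mathrm{low}},1)$ on $\{\delta_t>0\}$ via Proposition~\ref{prop:dynamics}; then square, multiply by $\|G_t\|^2\ge 0$, and take expectations (conditional expectations for the second claim). Your remarks on finiteness of $\mathbb{E}[\|G_t\|^2]$ and on $\{\delta_t>0\}$ having positive probability are harmless refinements that the paper leaves implicit.
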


\begin{proof}
By construction, $w_t\in[\epsilon_{\mathrm{low}},\epsilon_{\mathrm{high}}]$ almost surely, hence
\begin{equation}
\epsilon_{\mathrm{low}}^2 \le w_t^2 \le \epsilon_{\mathrm{high}}^2.
\end{equation}
Multiplying by the nonnegative random variable $\|G_t\|^2$ yields
\begin{equation}
\epsilon_{\mathrm{low}}^2\,\|G_t\|^2 \le \|w_t G_t\|^2 \le \epsilon_{\mathrm{high}}^2\,\|G_t\|^2
\quad\text{almost surely}.
\end{equation}
Taking expectations on both sides gives
\begin{equation}
\epsilon_{\mathrm{low}}^2\,\mathbb{E}\!\left[\|G_t\|^2\right]
\le
\mathbb{E}\!\left[\|w_t\,G_t\|^2\right]
\le
\epsilon_{\mathrm{high}}^2\,\mathbb{E}\!\left[\|G_t\|^2\right].
\end{equation}
For the conditional statement, note that by Proposition~\ref{prop:dynamics}, $\delta_t>0 \Rightarrow w_t\in[\epsilon_{\mathrm{low}},1)$, hence $w_t^2\le 1$ on the event $\{\delta_t>0\}$.
Therefore,
\begin{equation}
\|w_t G_t\|^2 \le \|G_t\|^2 \quad \text{almost surely conditioned on } \delta_t>0,
\end{equation}
and taking conditional expectations yields
\begin{equation}
\mathbb{E}\!\left[\|w_t G_t\|^2 \mid \delta_t>0\right]
\le
\mathbb{E}\!\left[\|G_t\|^2 \mid \delta_t>0\right].
\end{equation}
\end{proof}

\section{Experimental Setup}
\label{app:experimental_setup}

In this section, we provide comprehensive details regarding our experimental infrastructure, training configurations, and evaluation protocols to ensure reproducibility.

\subsection{Framework and Dataset}
We implement all algorithms using the \textbf{VERL}~\cite{sheng2025hybridflow} framework. For the training dataset, we utilize \textbf{dapo-math-17k} across all main experiments. This choice ensures a fair and controlled comparison with the baseline methods.

\subsection{Model Architectures and Constraints}
We evaluate our method across two representative model scales and architectures: \textbf{Qwen2.5-Math-7B} and \textbf{Qwen3-8B-Base}. To accommodate the varying reasoning capabilities and context window requirements of these models, we enforce specific maximum response length constraints during training: we set a maximum length of \textbf{8k} tokens for Qwen2.5-Math-7B and extend this to \textbf{20k} tokens for Qwen3-8B-Base to fully leverage its long-context reasoning potential.

\subsection{Training Implementation Details}
Our training configuration strictly follows the settings of the baselines (DAPO) to isolate the contribution of our method.

\paragraph{General Optimization Settings.}
Unless otherwise specified, we train with a global batch size of \textbf{512}. We employ gradient accumulation steps to manage memory, typically setting the mini-batch size to \textbf{32} with \textbf{16} accumulation steps. The learning rate is fixed at $10^{-6}$. Following the standard DAPO configuration, we exclude both KL divergence and entropy regularization losses from the primary objective, unless explicitly stated for specific baselines.

\paragraph{Baseline Descriptions.}
To rigorously evaluate the effectiveness of OWPO, we compare it against a spectrum of state-of-the-art RLVR and distillation methods:

\begin{itemize}
    \item \textbf{GRPO (Group Relative Policy Optimization):} A value-free RL algorithm that estimates advantages via group sampling. It normalizes rewards against group statistics to reduce variance without requiring a learned value function.

    \item \textbf{DAPO (Dynamic Sampling Policy Optimization):} An advanced variant of GRPO that introduces asymmetric clipping ($\epsilon_{high}=0.28, \epsilon_{low}=0.2$) and a dynamic sampling mechanism. It stabilizes training by ensuring a balanced ratio of positive and negative samples within each group.

    \item \textbf{Sym-DAPO (Symmetric KL Regularization):} This baseline introduces a standard reference policy $\pi_{\mathrm{ref}}$ to the vanilla DAPO. Unlike OWPO, it applies a \textbf{symmetric} KL divergence penalty (or equivalent symmetric clipping constraint) towards $\pi_{\mathrm{ref}}$ regardless of whether the deviation leads to higher rewards. This serves as the primary control group to demonstrate the limitations of "direction-agnostic pull-back."

    \item \textbf{OPD (On-Policy Distillation):} A distillation-based approach (also known as GKD) where the student policy is supervised by a teacher (reference) policy on the student's own generated trajectories. It minimizes the reverse KL divergence $\mathbb{D}_{\text{KL}}(\pi_{\theta} || \pi_{\mathrm{ref}})$ to ensure the student stays close to the teacher's distribution.

    \item \textbf{MOPD (Multi-teacher On-Policy Distillation):} A hybrid framework that formulates distillation as an RL problem. It utilizes importance sampling with clipped ratios and constructs a hybrid advantage that combines teacher log-probabilities with outcome rewards, aiming to balance dense supervision with correctness verification.
\end{itemize}

\subsection{Hyperparameters}
\label{app:hyperparameters}

For our proposed \textbf{OWPO}, we introduce specific hyperparameters governing the asymmetric trust region. The dynamic weight $w_t$ is clipped within the range \textbf{$[0.8, 1.2]$} (i.e., $\epsilon_{\mathrm{low}}=0.8, \epsilon_{\mathrm{high}}=1.2$) to prevent excessive variance while allowing sufficient modulation. For the iterative self-evolution process, we update the reference policy $\pi_{\mathrm{ref}}$ every \textbf{80 steps} (approximately one epoch of the training set) to facilitate the "Ratchet Effect."

\begin{table}[h]
    \centering
    \caption{Hyperparameter sensitivity analysis on the AIME24 benchmark using Qwen2.5-Math-7B. We report Pass@1 and Pass@16 accuracy. The selected hyperparameters used in our main experiments are highlighted in \textbf{bold}.}
    \label{tab:hyperparams_search}
    \renewcommand{\arraystretch}{1.1}
    \begin{tabular}{l c c c}
        \toprule
        \multirow{2}{*}{\textbf{Method}} & \multirow{2}{*}{\textbf{Hyperparameter}} & \multicolumn{2}{c}{\textbf{AIME24}} \\
        \cmidrule(lr){3-4}
         & & \textbf{Pass@1} & \textbf{Pass@16} \\
        \midrule
        \multirow{3}{*}{Sym-DAPO ($\beta$)}
         & $10^{-3}$ & \textbf{35.00} & \textbf{51.73} \\
         & $10^{-2}$ & 33.54 & 58.14 \\
         & $10^{-1}$ & 28.44 & 56.14 \\
        \midrule
        \multirow{6}{*}{MOPD ($\alpha$)}
         & 0.1 & 36.46 & 54.13 \\
         & 0.5 & 36.25 & 53.76 \\
         & 1.0 & 36.67 & 58.47 \\
         & 5.0 & \textbf{38.85} & \textbf{60.28} \\
         & 10.0 & 38.23 & 58.81 \\
         & 50.0 & 31.98 & 51.72 \\
        \bottomrule
    \end{tabular}
\end{table}

\paragraph{Baseline Hyperparameters.}
To ensure a rigorous comparison, we carefully tuned the key hyperparameters for the baseline methods. We utilize \textbf{AIME24} as our validation set; therefore, all optimal hyperparameters were selected based on the model's performance on this benchmark (detailed sensitivity analysis is provided in Table~\ref{tab:hyperparams_search}):

\begin{itemize}
    \item \textbf{GRPO:} We adopt the default configuration from DeepSeek-R1, setting the KL penalty coefficient to $\beta = 10^{-3}$.

    \item \textbf{Sym-DAPO:} For the symmetric KL regularization, we performed a hyperparameter search for the penalty coefficient $\beta$ towards the reference policy over the range $\{10^{-1}, 10^{-2}, 10^{-3}\}$. As shown in Table~\ref{tab:hyperparams_search}, we observed that $\beta = 10^{-3}$ yielded the optimal Pass@1 performance on AIME24 and selected it for our experiments.

    \item \textbf{MOPD:} Regarding the hybrid advantage formulation (see Eq. 4: $\hat{A}_{\text{MOPD}} = \text{sg}[\dots] + \alpha \hat{A}_{\text{ORM}}$), we conducted a grid search for the balancing coefficient $\alpha$ across the set $\{0.1, 0.5, 1, 5, 10, 50\}$. Our empirical results on the validation set indicated that $\alpha=5$ achieves the best balance, and thus we adopted this value.
\end{itemize}

\subsection{Evaluation Protocol}
We conduct a rigorous zero-shot evaluation across multiple benchmarks using a consistent sampling strategy with temperature $T=1.0$ and top-$p=0.7$.

\begin{itemize}
    \item \textbf{Mathematical Reasoning Benchmarks (AIME 24, AIME 25, AMC):} For these specialized mathematical reasoning datasets, we sample $k=32$ completions for each problem. We report both \textbf{Pass@1} and \textbf{Pass@16} scores to evaluate the model's precision and robustness.
    \item \textbf{General Reasoning Benchmarks (GPQA, MMLU-Pro):} To assess the generalizability of the learned reasoning patterns, we evaluate on GPQA (STEM-focused) and MMLU-Pro. For these broader tasks, we sample $k=8$ completions per problem and report the average accuracy.
\end{itemize}

\begin{table*}[t]
\caption{
    \textbf{Best performance comparison (Pass@1 vs. Pass@16).} 
    We report the performance of the best checkpoint across different RL methods on Qwen-2.5 and Qwen-3 base models.
}
  \label{tab:pass16_best}
  \centering
  \resizebox{\textwidth}{!}{
  \begin{tabular}{@{} ll cccccccc @{}}
  \toprule
  \multirow{2}{*}{\textbf{Model}} & \multirow{2}{*}{\textbf{Method}} & \multicolumn{2}{c}{\textbf{AIME24}} & \multicolumn{2}{c}{\textbf{AIME25}} & \multicolumn{2}{c}{\textbf{AMC}} & \multicolumn{2}{c}{\textbf{Average}} \\ \cmidrule(lr){3-4} \cmidrule(lr){5-6} \cmidrule(lr){7-8} \cmidrule(lr){9-10}
   & & Pass@1 & Pass@16 & Pass@1 & Pass@16 & Pass@1 & Pass@16 & Pass@1 & Pass@16 \\ \midrule
  \multirow{6}{*}{\begin{tabular}[c]{@{}l@{}}Qwen-2.5-\\ Math-7B-Base\end{tabular}} 
   & GRPO & 32.08 & 50.24 & 11.77 & 24.56 & 67.39 & 81.24 & 37.08 & 52.01 \\
   & DAPO & 37.19 & 58.00 & 15.62 & 30.60 & 68.86 & \textbf{89.59} & 40.56 & 59.40 \\ \cline{2-10}
   & Sym-DAPO & 35.00 & 51.73 & 16.04 & 30.23 & 67.58 & 86.56 & 39.54 & 56.17 \\
   & MOPD & 38.85 & \textbf{60.28} & 15.93 & \textbf{33.08} & 68.56 & 87.67 & 41.11 & \textbf{60.34} \\
   & OPD & 38.12 & 55.06 & 15.21 & 29.14 & 68.59 & 87.75 & 40.64 & 57.32 \\
   & OWPO & \textbf{41.67} & 55.84 & \textbf{17.92} & 29.20 & \textbf{72.14} & 87.41 & \textbf{43.91} & 57.48 \\
   \midrule
  \multirow{6}{*}{\begin{tabular}[c]{@{}l@{}}Qwen-3-\\ 8B-Base\end{tabular}} 
   & GRPO & 31.67 & 61.98 & 23.54 & 50.91 & 67.73 & 86.89 & 40.98 & 66.59 \\
   & DAPO & 36.67 & 71.41 & 27.39 & 48.98 & 71.88 & 89.87 & 45.31 & 70.09 \\ \cline{2-10}
   & Sym-DAPO & 36.04 & 69.98 & 26.88 & 48.16 & 70.74 & 88.98 & 44.55 & 69.04 \\
   & MOPD & 37.80 & 74.68 & 29.38 & 52.25 & 72.03 & 89.83 & 46.40 & 72.25 \\
   & OPD & 38.23 & 67.48 & 27.81 & 50.90 & 71.16 & 88.17 & 45.73 & 68.85 \\
   & OWPO & \textbf{44.38} & \textbf{76.34} & \textbf{33.54} & \textbf{52.41} & \textbf{77.79} & \textbf{90.48} & \textbf{51.90} & \textbf{73.08} \\
   \bottomrule
  \end{tabular}
  }
\end{table*}
\begin{table*}[t]
\caption{
    \textbf{Converged performance comparison (Pass@1 vs. Pass@16).} 
    We report the performance of the converged checkpoint (Last) across different RL methods on Qwen-2.5 and Qwen-3 base models.
}
  \label{tab:pass16_conv}
  \centering
  \resizebox{\textwidth}{!}{
  \begin{tabular}{@{} ll cccccccc @{}}
  \toprule
  \multirow{2}{*}{\textbf{Model}} & \multirow{2}{*}{\textbf{Method}} & \multicolumn{2}{c}{\textbf{AIME24}} & \multicolumn{2}{c}{\textbf{AIME25}} & \multicolumn{2}{c}{\textbf{AMC}} & \multicolumn{2}{c}{\textbf{Average}} \\ \cmidrule(lr){3-4} \cmidrule(lr){5-6} \cmidrule(lr){7-8} \cmidrule(lr){9-10}
   & & Pass@1 & Pass@16 & Pass@1 & Pass@16 & Pass@1 & Pass@16 & Pass@1 & Pass@16 \\ \midrule
  \multirow{6}{*}{\begin{tabular}[c]{@{}l@{}}Qwen-2.5-\\ Math-7B-Base\end{tabular}} 
   & GRPO & 29.79 & 48.38 & 12.60 & 26.99 & 64.87 & 86.67 & 35.75 & 54.01 \\
   & DAPO & 31.88 & 48.84 & 17.29 & 32.27 & \textbf{77.86} & 89.05 & 42.34 & 56.72 \\ \cline{2-10}
   & Sym-DAPO & 32.81 & \textbf{58.01} & 15.63 & 32.52 & 71.58 & \textbf{89.53} & 40.01 & \textbf{60.02} \\
   & MOPD & 37.19 & 57.30 & 16.98 & 34.14 & 62.91 & 84.71 & 39.03 & 58.72 \\
   & OPD & 35.52 & 52.20 & 16.56 & \textbf{37.38} & 69.05 & 88.02 & 40.38 & 59.20 \\
   & OWPO & \textbf{39.90} & 54.10 & \textbf{18.44} & 31.18 & 74.02 & 87.29 & \textbf{44.12} & 57.52 \\
   \midrule
  \multirow{6}{*}{\begin{tabular}[c]{@{}l@{}}Qwen-3-\\ 8B-Base\end{tabular}} 
   & GRPO & 29.27 & 58.84 & 20.42 & 45.02 & 66.98 & 89.33 & 38.89 & 64.40 \\
   & DAPO & 34.58 & 70.63 & 30.63 & \textbf{52.35} & 73.91 & 89.72 & 46.37 & 70.90 \\ \cline{2-10}
   & Sym-DAPO & 33.23 & 63.64 & 24.79 & 51.33 & 69.95 & 88.71 & 42.66 & 67.89 \\
   & MOPD & 36.04 & 71.17 & 27.19 & 49.84 & 72.82 & 89.86 & 45.35 & 70.29 \\
   & OPD & 35.42 & 67.36 & 26.56 & 49.52 & 70.22 & 88.00 & 44.07 & 68.29 \\
   & OWPO & \textbf{42.19} & \textbf{73.20} & \textbf{31.67} & 49.24 & \textbf{78.36} & \textbf{91.40} & \textbf{50.74} & \textbf{71.28} \\
   \bottomrule
  \end{tabular}
  }
\end{table*}

\section{Additional Results: Pass@16 Extension}
\label{app:pass16}

Table~\ref{tab:math_benchmarks} in the main paper reports Pass@1 under both the \textit{best} checkpoint (selected by AIME24)
and the \textit{converged} checkpoint (after training stabilizes).
To better characterize robustness under multi-sample inference, we provide an extended view that additionally includes Pass@16.
Specifically, Table~\ref{tab:pass16_best} reports the \textbf{best} checkpoint performance (Pass@1 vs.\ Pass@16), and
Table~\ref{tab:pass16_conv} reports the corresponding \textbf{converged} checkpoint performance.

\paragraph{Key observation.}
Across both base models and across benchmarks (AIME24/AIME25/AMC), OWPO improves Pass@1 while maintaining competitive Pass@16.
This indicates that the gains of OWPO are not merely a narrow improvement at the top-1 sample; rather, the method preserves
multi-sample success rates that reflect broader coverage of correct reasoning trajectories.
In particular, OWPO's one-way reweighting (accelerated correction for harmful deviations and gain locking for beneficial ones)
stabilizes training and reduces variance, which is consistent with retaining strong Pass@16 performance even as Pass@1 improves.

\paragraph{Best vs.\ Converged checkpoints.}
Comparing Table~\ref{tab:pass16_best} and Table~\ref{tab:pass16_conv}, OWPO exhibits strong \textit{converged} Pass@1 while
remaining competitive in Pass@16, suggesting that the method mitigates late-stage regression without sacrificing sampling robustness.
Overall, these results complement Table~\ref{tab:math_benchmarks} by showing that OWPO simultaneously improves
\emph{precision} (Pass@1) and preserves \emph{robustness} under additional sampling (Pass@16).

\end{document}